\tikzset{mycircled/.style={circle,draw,inner sep=0.05em,line width=0.04em, scale=0.8}}
\newcommand{\std}[1]{\scriptsize{$\pm$#1}}
\newcommand{\modelname}{\textsc{TimeX++}}%TExplainer
\newcommand{\shapex}{\textsc{ShapeX}}
\newcommand{\timex}{\textsc{TimeX}}
\newcommand{\timexp}{\textsc{TimeX++}}
\newtheorem{proposition}{Proposition}
\definecolor{myblue}{rgb}{0.4, 0.7, 1.0}
\title{ \shapex~: Shapelet-Driven Post Hoc Explanations \\for Time Series Classification Models }
\author{%
\textbf{Bosong Huang}$^{1}$,
\textbf{Ming Jin}$^{1}$\thanks{Corresponding authors.}\ ,\ 
\textbf{Yuxuan Liang}$^{2}$,
\textbf{Johan Barthelemy}$^{3}$,
\textbf{Debo Cheng}$^{4}$,\\
\textbf{Qingsong Wen}$^{5}$,
\textbf{Chenghao Liu}$^{6}$,
\textbf{Shirui Pan}$^{1}$\footnotemark[1] \\
$^{1}$Griffith University \quad
$^{2}$Hong Kong University of Science and Technology (Guangzhou) \quad
$^{3}$NVIDIA \\
$^{4}$Hainan University \quad
$^{5}$Squirrel~Ai~Learning, USA \quad
$^{6}$Salesforce~Research~Asia \\
\texttt{bosong.huang@griffithuni.edu.au}, 
\texttt{mingjinedu@gmail.com}, 
\texttt{s.pan@griffith.edu.au}
}
\begin{document}

\maketitle

\begin{abstract}
Explaining time series classification models is crucial, particularly in high-stakes applications such as healthcare and finance, where transparency and trust play a critical role.
Although numerous time series classification methods have identified key subsequences, known as shapelets, as core features for achieving state-of-the-art performance and validating their pivotal role in  classification outcomes, existing post-hoc time series explanation (PHTSE) methods primarily focus on timestep-level feature attribution. These explanation methods overlook the fundamental prior that classification outcomes are predominantly driven by key shapelets.
To bridge this gap, we present \shapex, an innovative framework that segments time series into meaningful shapelet-driven segments and employs Shapley values to assess their saliency. At the core of \shapex~ lies the Shapelet Describe-and-Detect (SDD) framework, which effectively learns a diverse set of shapelets essential for classification.
We further demonstrate that \shapex~ produces explanations which reveal causal relationships instead of just correlations, owing to the atomicity properties of shapelets.
Experimental results on both synthetic and real-world datasets demonstrate that \shapex~ outperforms existing methods in identifying the most relevant subsequences, enhancing both the precision and causal fidelity of time series explanations. Our code is made available at \url{https://github.com/BosonHwang/ShapeX}
%\vspace{-3mm}

\end{abstract}

\section{Introduction}\label{sec:introction}

% existing TS explanation method
Time series classification plays a critical role across various domains~\cite{mohammadi2024deep}. While deep learning models have significantly advanced classification performance in this area, their black-box nature often compromises explainability. This limitation becomes particularly critical in high-stakes applications such as healthcare and finance, where reliability and accuracy are paramount.
Most research on time-series model explainability focuses on post-hoc time-series explanation (PHTSE), predominantly via perturbation-based methods at the timestep level.
For example, Dynamask \cite{dynamicmasks2021_TSX} generates instance-specific importance scores through perturbation masks. Similarly, \timex\ \cite{TimeX2023Queen_TSX} trains a surrogate model to mimic pretrained model behavior, whereas \timexp\ \cite{TIMEX++_TSX} leverages information bottleneck to deliver robust explanations.

Despite these advancements, existing methods neglect an essential aspect of time series classification: the classification outcomes are often driven by \emph{key subsequences} rather than individual timesteps.  Recently, several classification methods \cite{CNNkernels,ShapeFormer,ShapesasTokens2024abstracted} have emphasized the significance of utilizing these crucial subsequences, known as \emph{shapelets}.
Take some examples for intuitive understanding: In electrocardiogram (ECG) signal classification, specific segments like the QRS complex are critical for accurate diagnostics \cite{ECG1998ecg}. In human activity recognition (HAR), patterns like gait cycles play a decisive role in classifying activities \cite{humanidentification2018aaai}.
Recent methods have expanded upon this foundational knowledge, exploring diverse strategies to incorporate shapelets into classification frameworks. For instance, ShapeFormer \cite{ShapeFormer} employs a discovery algorithm to extract class-specific shapelets and incorporates them into a transformer-based model to enhance classification performance. 
\citet{ShapesasTokens2024abstracted} generalizes shapelets into symbolic representations, enhancing flexibility and expressiveness. Similarly, \citet{CNNkernels} demonstrates that convolutional kernels can naturally function as shapelets, offering both strong discriminative power and interpretability. 
Together, these approaches underscore a growing consensus that shapelets are not only interpretable but also central to the performance of modern time series classification models.

\begin{wrapfigure}{ht}{0.5\linewidth}  % 
    \centering
    \vspace{-15pt} \includegraphics[width=1\linewidth]{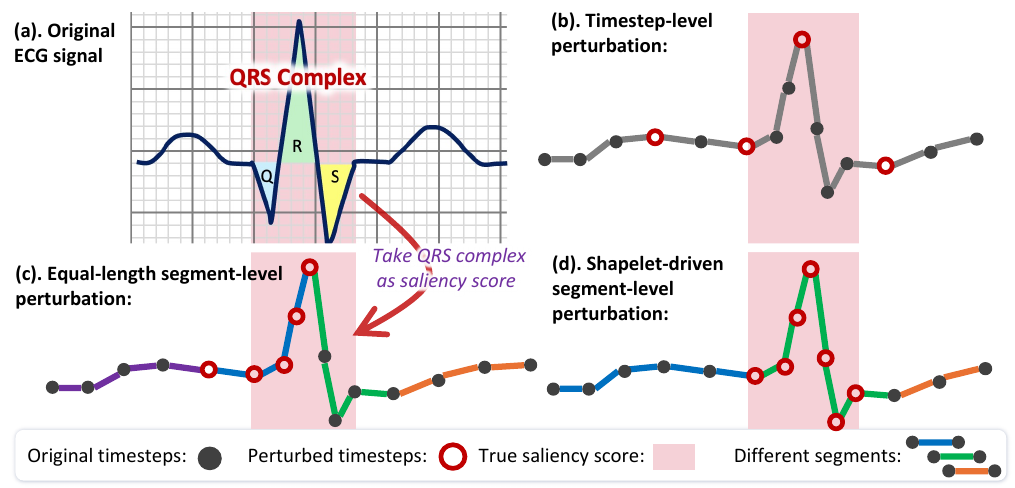}
    \vspace{-15pt}
    \caption{Perturbation strategies on an ECG example. (a) shows the original signal with saliency labels derived from QRS complexes---clinically critical regions for arrhythmia classification. (b)--(d) apply timestep-level, equal-length, and shapelet-driven perturbations, respectively. Only (d) maintains alignment with the salient regions.}    
    \label{fig:level}
    \vspace{-10pt} % 
\end{wrapfigure}

%%%%%%%%%%%%%%% reform start %%%%%

Although shapelets are widely recognized as key features in time series classification, existing explanation methods primarily rely on perturbing individual timesteps to compute saliency scores. This disrupts local temporal dependencies and fails to capture the holistic influence of key subsequences, resulting in fragmented and less interpretable explanations.
As shown in Figure \ref{fig:level} (b), timestep-level perturbations are scattered, making it difficult to align with true saliency patterns. While equal-length segment perturbations (Figure \ref{fig:level} (c)) may further 
reduce interpretability by splitting meaningful patterns or merging unrelated subsequences.
To overcome these limitations, 
an ideal approach should perturb segments aligned with meaningful shapelets, as illustrated in Figure \ref{fig:level} (d). This preserves the atomicity of key subsequences, potentially leading to more faithful and interpretable explanations.
However, existing time series segmentation methods primarily detect statistical change points \cite{vavilthota2024capturing,fryzlewicz2014wild} or extract subsequences unrelated to determining classification outcomes \cite{perslev2019utime}, making them unsuitable for explaining model behaviors.

To fill the gaps, we introduce \shapex, a novel approach that segments the time series into meaningful subsequences and computes Shapley value \cite{shapley1953value} as saliency scores. 
Instead of distributing importance across individual timesteps, \shapex\ aggregates timesteps into cohesive, shapelet-driven segments that serve as ``players'' in the Shapley value computation. By measuring each segment's marginal contribution to the black-box model's prediction, 
this method clearly identifies which subsequences significantly influence classification outcomes.
Specifically, the Shapelet Describe-and-Detect (SDD) framework within \shapex\ efficiently discovers representative and diverse shapelets most critical for classification.
Subsequently, \shapex\ segments the time series to align closely with these dominant temporal patterns, yielding more accurate and interpretable explanations.
Our key contributions are summarized as follows:
\begin{enumerate}
     \item \shapex\ pioneers PHTSE at the shapelet-driven segment level, offering a precise and principled alternative to timestep-level methods. Its SDD framework is the first shapelet learning approach designed specifically for PHTSE, enforcing representativeness and diversity.
    \item We provide a theoretical justification for interpreting the Shapley value computed by \shapex\ as an approximation of the model-level Conditional Average Treatment Effect (CATE), thereby enabling more trustworthy and robust interpretations across diverse real-world datasets. 
    \item Our experiments demonstrate that \shapex\ consistently outperforms the baseline models, delivering more accurate and reliable explanations in critical applications. To our knowledge, it is also the first approach to conduct a large-scale, occlusion-based evaluation across the entire UCR archive of 100+ datasets, establishing a more comprehensive benchmark for explanation quality. 
\end{enumerate}

\section{Related Work}
The field of \emph{time series explainability} has gained increasing attention, aiming to interpret the decision process of \emph{black-box} models applied to temporal data~\cite{theissler_explainable_2022_access,ismail2020benchmarking}. Existing methods are commonly categorized into \emph{in-hoc} and \emph{post-hoc} approaches. In-hoc methods embed interpretability directly into the model, typically via transparent representations such as \emph{Shapelets} or self-explaining architectures, as seen in TIMEVIEW~\cite{timeview} and VQShape~\cite{vqshape}. In contrast, post-hoc methods generate explanations after the model has been trained and are further divided into gradient-based and perturbation-based techniques. The former includes Integrated Gradients (IG)~\cite{IG2017axiomatic} and SGT+GRAD~\cite{SGTGRAD}, while the latter observes prediction changes under input perturbation, as in CoRTX~\cite{CoRTX} and other related work. 
For convenience, and following prior literature focused specifically on temporal data, we refer to post-hoc explainability methods in the time series domain as \emph{Post-Hoc Time Series Explainability (PHTSE)}.

Several foundational studies have underscored the unique challenges of explaining time series models, such as the limitations of standard saliency techniques~\cite{ismail2020benchmarking}, the use of KL-divergence to quantify temporal importance~\cite{tonekaboni_what_nodate_FIT}, and the development of symbolic shapelet-based explanation rules~\cite{spinnato_understanding_2024_Italy}. These works laid a foundation for understanding the temporal nature of model explanations.

Building on these insights, a major line of recent work has focused on PHTSE. Unlike generic saliency methods, PHTSE methods are explicitly designed for time-dependent input–output mappings. Representative approaches include {Dynamask}~\cite{dynamicmasks2021_TSX}, which learns instance-specific perturbation masks; {WinIT}~\cite{winit}, which models varying temporal dependencies via windowed perturbation; and the {\timex} family~\cite{TimeX2023Queen_TSX, TIMEX++_TSX}, which ensures faithfulness through interpretable surrogate models and information bottleneck regularization. For a broader discussion of deep learning explainability methods, see Appendix~\ref{ap:related_works}.

%%%%%
% \begin{figure}[t]
%     \centering
%     \includegraphics[width=1\linewidth]{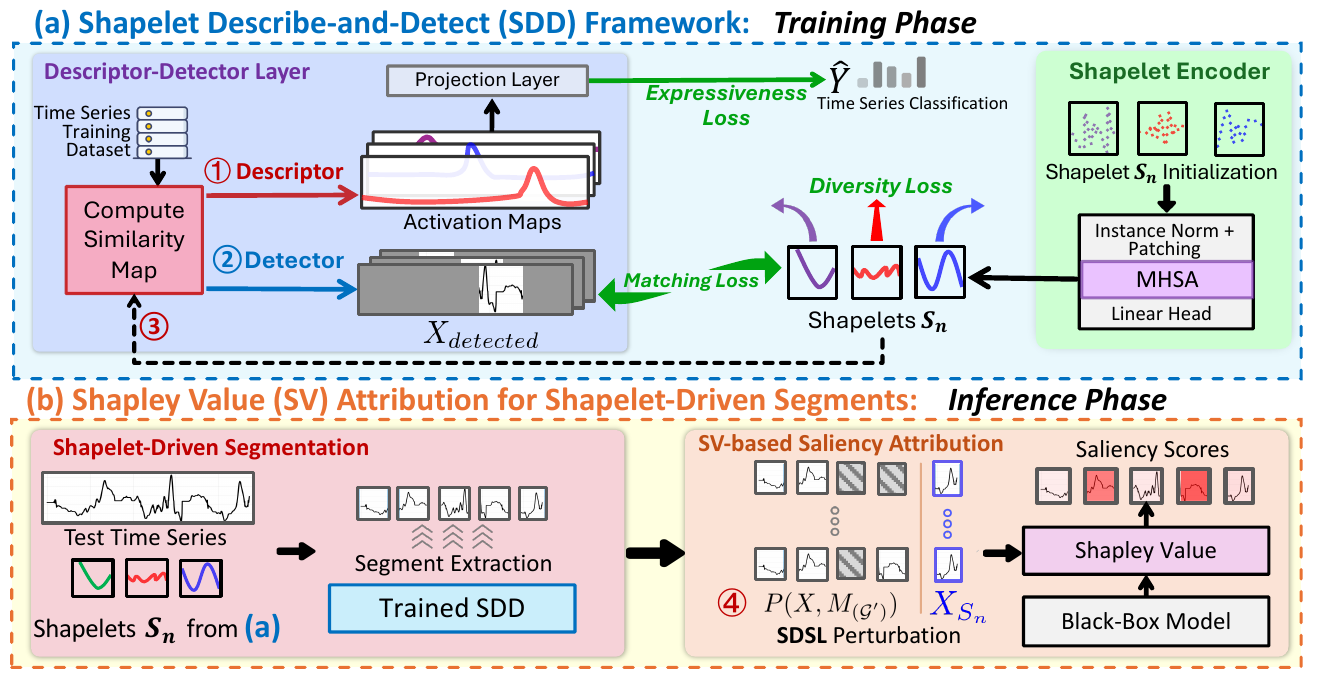} 
%     \caption{
%     Overview of \shapex. 
% In the training phase, given time series from dataset and the initialized shapelets, 
% \tikzmarknode[mycircled,red]{a1}{1} \textit{descriptor} generates activation maps by \tikzmarknode[mycircled,red]{a1}{3} taking shapelets as the 1D convolution kernel, demonstrating to which  extent  the input time series aligns with each shapelet. Meanwile, \tikzmarknode[mycircled,red]{a1}{2} \textit{detector} locates key subsequences for the morphological matching of shapelets.
% Simultaneously, the shapelets are optimized using three loss functions.
%  In inference, the learned shapelets first trigger a shapelet-driven segmentation of the test series; the resulting segments are next perturbed by SDSL to obtain \tikzmarknode[mycircled,red]{a1}{4}  $P(X,M_{\mathcal{G'}})$. Finally, Shapley value is applied to compute the final saliency scores.} 
%     \label{fig:main}
% \end{figure}

%%%%%%%%%%%%%%%%%%%%%%%%%%%% ShapeX method  %%%%%%%%%%%%%%%%%%%%%%%%%%%%%%%%%%%%%%%%%%%%%%%%%%%%%%%%%%

\section{Methodology} \label{sec:shapex}

% As illustrated in Figure~\ref{fig:main}, \shapex\ consists of two major phases: during training, it learns a compact set of shapelets using the Shapelet Describe-and-Detect (SDD) framework; during inference, these shapelets are used to extract aligned segments, which are first perturbed at the Shapelet-Driven Segment Level (SDSL) and then evaluated through Shapley Value (SV) Attribution to produce faithful post-hoc explanations. 
Figure~\ref{fig:main} shows the two major phases of \shapex. \textbf{Training:} the Shapelet Describe-and-Detect (SDD) framework learns a compact set of shapelets. \textbf{Inference:} these shapelets align segments that are (i) perturbed at the Shapelet-Driven Segment Level (SDSL) and (ii) assessed with Shapley value Attribution, producing faithful post-hoc explanations.
To keep the discussion self-contained, we begin by introducing the necessary background and notation.

\begin{figure}[t]
    \centering
    \includegraphics[width=1\linewidth]{figures/shapex_overall.pdf} 
    \caption{
    Overview of \shapex. 
In the training phase, given time series from dataset and the initialized shapelets, 
\tikzmarknode[mycircled,red]{a1}{1} \textit{descriptor} generates activation maps by \tikzmarknode[mycircled,red]{a1}{3} taking shapelets as the 1D convolution kernel, demonstrating to which  extent  the input time series aligns with each shapelet. Meanwile, \tikzmarknode[mycircled,red]{a1}{2} \textit{detector} locates key subsequences for the morphological matching of shapelets.
Simultaneously, the shapelets are optimized using three loss functions.
 In inference, the learned shapelets first trigger a shapelet-driven segmentation of the test series; the resulting segments are next perturbed by SDSL to obtain \tikzmarknode[mycircled,red]{a1}{4}  $P(X,M_{\mathcal{G'}})$. Finally, Shapley value is applied to compute the final saliency scores.} 
    \label{fig:main}
\end{figure}

\subsection{Background \& Notation} 
We address the task of explaining time series classification models in a post-hoc manner. Given a trained black-box model that predicts class labels from time series data, our goal is to analyze its behavior by identifying which input components contribute most to the prediction.

Formally, let each input be a time series of length $T$, where each timestep $t$ has a $D$-dimensional feature vector $X_t \in \mathbb{R}^D$, forming $X = [X_1, X_2, \ldots, X_T]$. The associated label $Y$ lies in a $C$-dimensional space, $Y \in \mathbb{R}^C$. The objective of time series classification is to learn a classifier $f(\cdot)$ that maps $X$ to its predicted label $\hat{Y} = f(X)$. 

To provide PHTSE, we employ an explanation method $\mathcal{E}(\cdot)$ that produces saliency scores $R \in \mathbb{R}^{T \times D}$, where each element $R_{t,d} \in [0,1]$ quantifies the importance of the input feature $X_{t,d}$ for the predicted label $\hat{Y}$. Higher values of $R_{t,d}$ indicate stronger influence on the classification outcome. 
In this work, we focus on the univariate setting, i.e., $D = 1$, and omit the feature dimension $d$ for clarity; all subsequent discussions assume univariate time series unless otherwise specified.
Furthermore, the evaluation of $\mathcal{E}(\cdot)$ depends on the availability of ground-truth saliency labels: when available, they serve as direct supervision or evaluation references; otherwise, occlusion-based perturbation is used to assess the quality of the generated $R$.

In the realm of  PHTSE methods, perturbation-based explanation methods represent the most primary category. We briefly present the essential background here; for a more detailed discussion of perturbation paradigms, saliency granularity, see Appendix~\ref{ap:background}.

The concept of \emph{shapelets} has played a central role in time series interpretability. It was originally introduced as short, class-discriminative subsequences directly extracted from raw time series~\cite{shapelets2009ye}. These subsequences served as intuitive and interpretable building blocks, revealing characteristic temporal patterns most indicative of specific classes. Under this classical definition, each shapelet corresponded to an explicit segment of the input series and was closely tied to a particular class identity.
Over time, however, this notion has evolved into a more flexible and abstract representation. Modern approaches no longer require shapelets to be literal subsequences of the data; instead, they can be viewed as prototypical temporal patterns learned through optimization or representation learning~\cite{shapelets2014learning,shapelets2016AAAI,shapelets2018IEEE,ShapesasTokens2024abstracted}. These learned shapelets are typically not class-specific but rather aim to capture generalizable and semantically meaningful features shared across the dataset.

\subsection{Shapelet Describe-and-Detect (SDD) Framework}\label{sec:sdd}
This section introduces the SDD framework used during training to learn a compact set of discriminative shapelets for segmenting time series into interpretable units. 

Inspired by the Describe-and-Detect paradigm from image anomaly detection~\cite{D2-net}, we adapt this idea to time series by designing shapelets that both activate strongly on discriminative regions and serve as semantic prototypes.
The SDD framework consists of a \textit{descriptor-detector} layer to localize high-activation regions, a shapelet encoder to enhance representation quality, and a multi-objective training strategy that promotes classification relevance and diversity.
In contrast to prior methods that rely on large shapelet banks, \shapex\ emphasizes conciseness and discriminativeness through joint optimization.

\paragraph{\textit{Descriptor-Detector} Layer.}

To identify regions in the input time series that align with key patterns, we apply a two-step matching mechanism consisting of a \textit{descriptor} and a \textit{detector}.

The \textit{descriptor} uses a set of shapelets $\mathcal{S} = \{S_1, S_2, \dots, S_N\}$, where each $S_n \in \mathbb{R}^L$ is a fixed-length prototype, and  $N$ is the number  of shapelets. For an input $X \in \mathbb{R}^T$, we compute a similarity map $I \in \mathbb{R}^{T \times N}$ using valid-width 1D convolution:
\begin{equation}
I = X * \mathcal{S} + b,
\label{eq:conv}
\end{equation}
where $*$ denotes 1D convolution with same padding (i.e., zero padding on both sides), and $b$ is a learnable bias term. 
Each $I_{t,n}$ represents the similarity between shapelet $S_n$ and the local subsequence centered at position $t$.

We then apply a softmax across the shapelet dimension at each timestep to obtain an activation map $A \in \mathbb{R}^{T \times N}$:
\begin{equation}
A_{t,n} = \frac{e^{I_{t,n}}}{\sum_{m=1}^{N} e^{I_{t,m}}}.
\label{eq:activation}
\end{equation}
Here, $A_{t,n}$ reflects the normalized alignment strength between $S_n$ and $X$ at time $t$.

The \textit{detector} identifies the peak activation position $t_n^* = \arg\max_t A_{t,n}$ and extracts a subsequence of length $L$ centered at $t_n^*$:
\begin{equation}
X_n^{\text{detected}} = X[t_n^* - \lfloor L/2 \rfloor : t_n^* + \lfloor L/2 \rfloor].
\end{equation}
This segment $X_n^{\text{detected}} \in \mathbb{R}^L$ captures the region in $X$ most strongly aligned with shapelet $S_n$, forming the basis for downstream segmentation and attribution.

\paragraph{Shapelet Encoder and Training Losses.}
We apply a lightweight patch-based encoder to refine the temporal structure of each shapelet; see Appendix~\ref{ap:se} for details.

To jointly optimize shapelet quality, we define a composite loss over training samples $(X, Y)$, targeting three objectives: expressiveness, alignment accuracy, and diversity. Let $\mathcal{S} = \{S_1, S_2, \dots, S_N\}$ denote the set of learnable shapelets.
The expressiveness loss encourages shapelets to be discriminative with respect to the target label. For each training mini-batch of size $K$, we obtain the predicted probabilities $\hat{Y}_i$ for each input by applying a projection layer to its activation map $A$. The expressiveness loss is defined as:
\begin{equation}
\mathcal{L}_{\text{cls}} = - \sum_{i=1}^{K} \left( Y_i \log \hat{Y}_i + (1 - Y_i) \log (1 - \hat{Y}_i) \right).
\end{equation}

%%%%%%%%%%
The matching loss promotes alignment between each $S_n \in \mathcal{S}$ and its most responsive segment $X_n^{\text{detected}}$, encouraging each shapelet to closely match the local temporal shape of the input. The diversity loss encourages non-redundancy by penalizing shapelet pairs with high similarity, promoting angular diversity and preventing redundant patterns~\cite{liang2024timecsl}. The two losses are defined as:
\begin{equation}
\mathcal{L}_{\text{match}} = \sum_{n=1}^{N} d(S_n, X_n^\text{detected}),  \qquad \mathcal{L}_{\text{div}} = \sum_{i=1}^{N} \sum_{j=i+1}^{N} \max\left(0, \delta - \text{sim}(S_i, S_j)\right),
\end{equation}
where $d(\cdot)$ denotes the Euclidean distance, $\text{sim}(\cdot, \cdot)$ is the cosine similarity, and $\delta$ is a distance margin. The total objective is a weighted combination of the above components:
\begin{equation}
\mathcal{L} = \mathcal{L}_{\text{cls}} + \lambda_{\text{match}} \mathcal{L}_{\text{match}} + \lambda_{\text{div}} \mathcal{L}_{\text{div}},
\end{equation}
where $\lambda_{\text{match}}$ and $\lambda_{\text{div}}$ balance alignment and diversity.

\subsection{Shapley Value Attribution for Shapelet-Driven Segments} \label{sec:sv}
To perform post-hoc time series explanation, \shapex\ evaluates the importance of shapelet-aligned segments through Shapley value analysis.
These segments correspond to regions aligned with the learned shapelets.
Since this framework relies on marginal contributions over different coalitions, we simulate segment inclusion or exclusion via perturbation: retaining a segment corresponds to including it in the coalition, while perturbing it simulates its removal. This enables the computation of saliency scores reflecting each segment’s contribution to the model output.

\paragraph{Shapelet-Driven Segmentation.}
To extract these segments, \shapex\ utilizes the shapelets learned during training to extract interpretable segments that correspond to key morphological patterns. These segments are derived from shapelet activation maps computed on the test input, enabling a structured and semantically meaningful decomposition of the time series.

Specifically, for a time series $X \in \mathbb{R}^T$ from the test set and each learned shapelet $S_n \in \mathcal{S} = \{S_1, \dots, S_N\}$, we compute the activation map $A_n \in \mathbb{R}^T$ using the
trained SDD from Section~\ref{sec:sdd}, indicating alignment strength between $X$ and $S_n$ across timesteps.

To extract high-activation regions, we apply a threshold $\Omega$ to obtain:
\begin{equation}
    \mathcal{T}_{(S_n)} = \{\,t \in \{1, \dots, T\} \mid A_{n,t} > \Omega\,\}.
\end{equation}

The corresponding segment is:
\begin{equation}
X_{(S_n)} = \{ X_t \mid t \in \mathcal{T}_{(S_n)}\},
\end{equation}
capturing the region in $X$ that is morphologically aligned with shapelet $S_n$.

Let \(\mathcal{G} = \{1, 2, \dots, N\}\) index the set of shapelet-aligned segments \(\{X_{(S_n)}\}_{n=1}^{N}\), each corresponding to a shapelet \(S_n \in \mathcal{S}\). For any subset \(\mathcal{G}' \subseteq \mathcal{G}\), the union of activated timesteps is defined as:
\begin{equation}
\mathcal{T}_{(\mathcal{G}')} = \bigcup_{n \in \mathcal{G}'} \mathcal{T}_{(S_n)}.
\end{equation}
These shapelet-driven segments serve as explanation units in the Shapley value analysis that follows.

\paragraph{Shapelet-driven Segment-Level (SDSL) Perturbation.}
We define a binary perturbation mask \( M_{(\mathcal{G}')} \in \mathbb{R}^T \) for any subset of segment indices \( \mathcal{G}' \subseteq \mathcal{G} = \{1, \dots, N\} \):
\begin{equation}
M_{(\mathcal{G}')}
=
\begin{cases}
1, & t \in \mathcal{T}_{(\mathcal{G}')} \\
0, & t \in \mathcal{T} \setminus \mathcal{T}_{(\mathcal{G}')},
\end{cases}
\end{equation}
where the perturbation function defined in Equation~\ref{eq:pert} is instantiated as \( P(X_t, M_{(\mathcal{G}')} ) \). However, the current baseline valuing approach, whether zero-filling or averaging, often introduces abrupt changes at boundaries. These changes may prevent the value function in Shapley value from fully eliminating the influence of segments outside the coalition. To address this, we propose a linear perturbation, where the baseline value \( B_t \) is set as:
\begin{equation}
    B_t = X_{t_{\text{start}}} + \frac{t - t_{\text{start}}}{t_{\text{end}} - t_{\text{start}}}(X_{t_{\text{end}}} - X_{t_{\text{start}}}), \quad t \in [t_{\text{start}}, t_{\text{end}}], \label{eq:BT}
\end{equation}
where \( X_{t_{\text{start}}}, X_{t_{\text{end}}} \) are the values at the boundaries of the perturbed region. 
This design isolates the contribution of \( X_{(\mathcal{G}')} \) and enables Shapley value computation. 
Intuitively, this linear interpolation creates a smooth transition between the two boundaries, preventing artificial discontinuities when a segment is masked.

\paragraph{Shapley Value Computation.}
We compute the Shapley value \( \phi_n \) for each segment \( X_{(S_n)} \) (indexed by \( n \in \mathcal{G} \)) as:
\begin{equation}
\phi_n = \sum_{\mathcal{G}' \subseteq \mathcal{G} \setminus \{n\}} \frac{|\mathcal{G}'|! \, (|\mathcal{G}| - |\mathcal{G}'| - 1)!}{|\mathcal{G}|!}
\left[ f\left(P\left(X, M_{(\mathcal{G}' \cup \{n\})} \right)\right) - f\left(P\left(X, M_{(\mathcal{G}')} \right)\right) \right],
\label{eq:shapley_value}
\end{equation}

where \( X_{t_{\text{start}}} \) and \( X_{t_{\text{end}}} \) are the values at the boundaries of the perturbed region, and \( P(X, M_{(\mathcal{G}')}) \) denotes the perturbed input defined in Equation~\ref{eq:pert}.
The coefficient gives the probability that segment $n$ is added after the subset $\mathcal{G}'$ in a random ordering of all segments, and $f(\cdot)$ denotes the black-box classifier.

To further reduce the complexity in computation and based on the assumption of temporal dependency, 
we apply a temporary-relational subset extract strategy, which is to restrict the range of coalitions in Shapley value to a subset of segments that are directly or indirectly connected to the current \( X_{(S_n)} \). This modification reduces the computational complexity of each segment’s Shapley value from \( O(N!) \) to \( O(N) \).
Ultimately, the saliency score is computable using Equation~\ref{eq:R_t} (see Appendix~\ref{ap:background} for the detailed formulation).

\section{ShapeX as Approximate Causal Attribution} \label{sec:cate_main}
In this section, we present a theoretical analysis of \shapex\ as a method for approximate causal attribution in time series models.

Most post-hoc explainers for time series models merely expose \textbf{correlations}: they estimate the associations between input regions and model predictions, often by training proxy models or computing correlation-based importance scores. However, these methods do not guarantee that the highlighted regions are \textbf{causal drivers} of the model's predictions. In contrast, \shapex\ is designed to provide \textbf{model-level causal insight} by explicitly conducting causal intervention~\cite{pearl2016causal}. Specifically, SDSL perturbation interprets the actions of ``keeping vs.\ masking a segment'' as an intervention in the model’s input space, while shapelet-driven segmentation isolates semantically coherent subsequences for calculating Shapley value. By framing perturbations as interventions, \shapex\ moves beyond correlational explanations to provide causal attributions that are more robust to confounding noise, more stable across re-training, and better aligned with practitioners’ needs in sensitive domains such as medical triage.

To formalize the causal semantics of \shapex, we demonstrate that its attribution mechanism approximates the concept of the \textbf{CATE}~\cite{imbens2015causal,CATE2017estimating}.
The CATE quantifies the expected change in an outcome resulting from a specific intervention, conditioned on a given context.  
In our setting, the outcome is the model prediction \(f(X)\), the intervention is whether a shapelet-aligned segment \(X_{(S_n)}\) is retained or masked, and the context is defined by the remaining segments indexed by \(\mathcal{G}' \subseteq \mathcal{G} \setminus \{n\}\), where \(\mathcal{G} = \{1, \dots, N\}\) is the set of all segment indices. Shapelet-driven segmentation naturally provides localized and semantically coherent regions, making it particularly well-suited for modeling such conditional interventions. Importantly, the resulting CATE in this setting is defined with respect to the \textbf{model prediction}, not the true outcome $Y$. Hence, we refer to this quantity as the \textbf{model-level CATE}, denoted by $\tau^{\text{model}}$, measuring the causal strengths of \(X_{(S_n)}\) on \(f(X)\). We formalize this relationship in the following proposition:

\begin{proposition}[Shapley Value as Model-Level CATE]
\label{prop:shapley_as_cate}
Let $f(X)$ denote the model’s prediction for input $X$, and let $X_{(S_n)}$ denote the segment aligned with shapelet $n \in \mathcal{G}$, where $\mathcal{G} = \{1, \dots, N\}$ is the index set of all segments. Then, under the intervention defined by retaining or masking segment $X_{(S_n)}$, the Shapley value $\phi_n$ computed as
\begin{equation}
\phi_n = \mathbb{E}{\mathcal{G}’} \left[  f(X_{\mathcal{G}' \cup \{n\}}) - f(X_{\mathcal{G}’}) \right]
\end{equation}
is equivalent to the model-level CATE $\tau_n^{\text{model}}$, conditioned on context $\mathcal{G}' \subseteq \mathcal{G} \setminus \{n\}$.
\end{proposition}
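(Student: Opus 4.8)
The plan is to prove the equivalence through three identification steps: recast the SDSL perturbation as a causal intervention, recognize each bracketed marginal-contribution term in Equation~\ref{eq:shapley_value} as a context-conditioned treatment effect, and show that the Shapley weights assemble these conditional effects into exactly $\tau_n^{\text{model}}$. To set up the potential-outcome scaffolding, I would introduce a treatment indicator $T_n \in \{0,1\}$ for segment $n$, where $T_n = 1$ retains $X_{(S_n)}$ and $T_n = 0$ masks it through the linear-interpolation baseline $B_t$ of Equation~\ref{eq:BT}, with the outcome taken to be the deterministic model output $f(\cdot)$. Because $f$ is a fixed map whose input we manipulate directly, the perturbed input $P(X, M_{(\mathcal{G}')})$ realizes the do-operator $\mathrm{do}(T_n = 0)$ rather than observational conditioning: masking overwrites the segment with a content-neutral interpolant while leaving the other segments untouched. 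I would invoke the atomicity of shapelets to argue that each $X_{(S_n)}$ is a self-contained unit, so masking one segment neither removes nor injects information in the others; this supplies the no-interference and consistency conditions, and the deterministic, externally chosen assignment makes unconfoundedness immediate.

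With the intervention in place, I would identify the bracketed difference. For a fixed context $\mathcal{G}' \subseteq \mathcal{G}\setminus\{n\}$, the quantity $f(X_{\mathcal{G}'\cup\{n\}}) - f(X_{\mathcal{G}'})$ compares the outcome under $T_n = 1$ against $T_n = 0$ while holding the surrounding segments fixed to the configuration indexed by $\mathcal{G}'$. By definition this is the treatment effect of segment $n$ conditioned on the context $\mathcal{G}'$, i.e.\ the context-local model-level CATE, which I denote $\tau_n^{\text{model}}(\mathcal{G}')$.

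The last step converts the Shapley weights into an expectation over contexts. I would check that the coefficients $\frac{|\mathcal{G}'|!\,(|\mathcal{G}|-|\mathcal{G}'|-1)!}{|\mathcal{G}|!}$ are nonnegative and normalize to one: grouping subsets of $\mathcal{G}\setminus\{n\}$ by cardinality $k$ and using $\binom{N-1}{k}\frac{k!\,(N-k-1)!}{N!} = \tfrac{1}{N}$ yields a total of $1$. These weights are therefore a genuine distribution $\pi$ over contexts — the familiar random-permutation law — and substituting the identification of the previous step gives
\[
\phi_n = \sum_{\mathcal{G}'\subseteq\mathcal{G}\setminus\{n\}} \pi(\mathcal{G}')\,\tau_n^{\text{model}}(\mathcal{G}') = \mathbb{E}_{\mathcal{G}'\sim\pi}\!\left[\tau_n^{\text{model}}(\mathcal{G}')\right] = \tau_n^{\text{model}},
\]
which is precisely the model-level CATE marginalized over the Shapley context distribution, as claimed.

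I expect the genuine obstacle to be conceptual rather than computational: justifying that SDSL masking is a true $\mathrm{do}(\cdot)$ intervention and not mere conditioning. The crux is defending that the linear-interpolation baseline is content-neutral — that it strips the class-discriminative signal of $X_{(S_n)}$ without introducing boundary artifacts or spurious structure the classifier could latch onto. I would lean on the atomicity of shapelets and the boundary-matching design of $B_t$ to support the no-interference and consistency assumptions, and flag any residual sensitivity of $f$ to the interpolant itself as the gap that renders the attribution \emph{approximate}, in keeping with the section title.
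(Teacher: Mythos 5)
Your proposal is correct and follows essentially the same route as the paper's proof: both set up potential outcomes $Y(1,\mathcal{G}')=f(X_{\mathcal{G}'\cup\{n\}})$ and $Y(0,\mathcal{G}')=f(X_{\mathcal{G}'})$, invoke consistency/SUTVA, ignorability, and positivity to identify the bracketed difference as a context-conditioned treatment effect, and then read off the equality by definition of $\tau_n^{\text{model}}$ as the expectation over coalitions. Your explicit check that the Shapley coefficients $\frac{|\mathcal{G}'|!\,(|\mathcal{G}|-|\mathcal{G}'|-1)!}{|\mathcal{G}|!}$ normalize to a genuine distribution over contexts is a small step the paper leaves implicit (its proposition already states $\phi_n$ in expectation form), and your caveat about the content-neutrality of the linear-interpolation baseline matches the paper's framing of the result as approximate causal attribution.
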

A detailed proof is provided in Appendix~\ref{sec:CATE_appendix}. This equality holds exactly when all coalitions $\mathcal{G}’$ are exhaustively enumerated. Under subset sampling, the estimator remains unbiased given a valid surrogate randomization mechanism.

\begin{wrapfigure}{r}{0.43\linewidth}  
    \centering
    \vspace{-5pt} \includegraphics[width=1\linewidth]{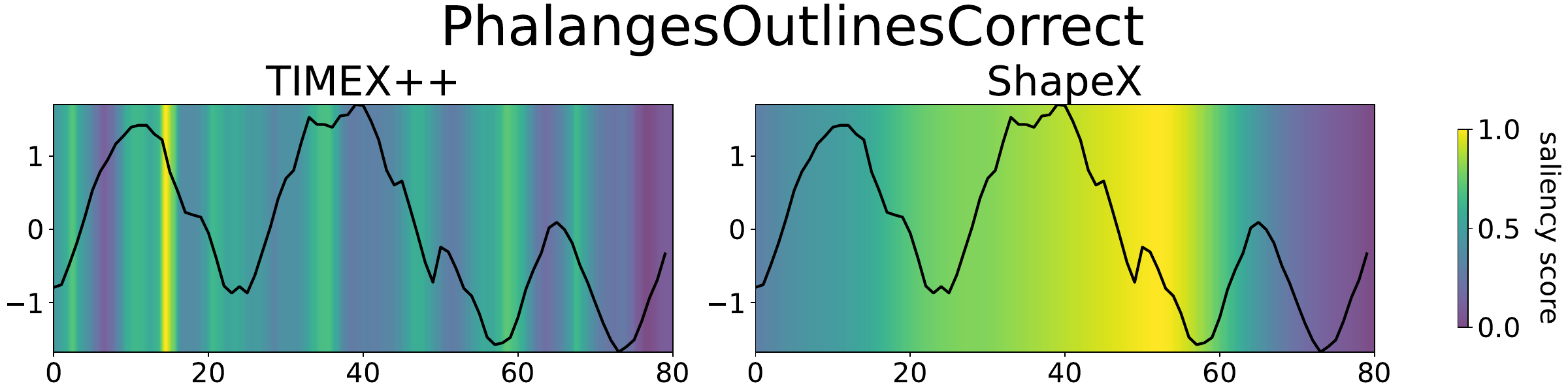}
    \vspace{-15pt} % 
    \caption{The saliency scores generated by \shapex\  and \modelname\ on PhalangesOutlineCorrect  dataset.}
    \label{fig:casal_pha}
    \vspace{-5pt} 
\end{wrapfigure}

We further conduct a case study in Figure~\ref{fig:casal_pha} to underline this mechanism, see Appendix \ref{ap:case} for details: \modelname-based maps scatter importance across many timesteps, hinting only at a loose association with the outcome. \shapex, however, concentrates saliency on the growth plate transition region, the true biomechanical determinant of class labels, providing visually plausible evidence of a causal structure.

\iffalse
\paragraph{Shapley Value as Approximate CATE}
We demonstrate that the Shapley value computed in Equation~\eqref{eq:shapley_value} is theoretically equivalent to the model-level CATE. Specifically, for each segment index \(n \in \mathcal{G}\), we have:
\begin{equation}
    \phi_n = \mathbb{E}_{\mathcal{G}'} \left[ f(X_{\mathcal{G}' \cup \{n\}}) - f(X_{\mathcal{G}'}) \right] = \tau_{n}^{\text{model}}.
\end{equation}
When the coalition set is fully enumerated, this equality holds exactly; under subset sampling, it remains an unbiased estimate under the surrogate randomization mechanism. Detailed derivations and theoretical discussion are provided in Appendix~\ref{sec:CATE_appendix}. \fi

%%%%%%%%%%%%%%%%%%%%%%%%%%%%%%%%%%%%%%%%%%%%%% causal %%%%%%%%%%%%%%%%%%%%%%%%%%%%%%%%%%%%%%%%%%%%%%%%

\section{Experiments}\label{sec:exp}

The explanatory effectiveness of \shapex\ is assessed across four synthetic datasets and a comprehensive suite of real-world datasets, considering two different perspectives: (1) For datasets with ground-truth saliency scores, we conduct saliency score evaluation experiments. (2) For datasets without ground-truth saliency scores, we perform occlusion experiments \cite{dynamicmasks2021_TSX}. The best and second-best results in all experiments are highlighted in bold and underlined, respectively. 
In terms of the black-box models to be explained, we select four of the most widely used time series classification models: Transformer \cite{vaswani2017attention}, LSTM \cite{gers2002learning}, CNN \cite{cnn2016multi} and MultiRocket \cite{tan2022multirocket}.
The experimental results presented in the main body are all based on vanilla Transformer \cite{vaswani2017attention}  as black-box models.
Further experimental details can be found in Appendix \ref{ap:ExperimentalDetails}.

\textbf{Datasets.} We evaluate on both synthetic and real-world time series. The synthetic data includes four motif-based binary classification datasets: \textbf{(i) MCC-E}, \textbf{(ii) MTC-H}, \textbf{(iii) MCC-H}, and \textbf{(iv) MTC-E}, following \cite{Persistence-BasedMotif}. Each sample is annotated with ground-truth saliency for evaluating explanation quality.
For real-world data, we use the \textbf{ECG} dataset~\cite{ECG2001MIT-BIH}, which similarly provides ground-truth labels, and the full \textbf{UCR Archive}~\cite{ucr2019}, comprising over 100 univariate classification datasets across diverse domains. Further details are provided in Appendix~\ref{ap:data}.

\textbf{Benchmarks.} The proposed methods are compared with multiple benchmark approaches (see Appendix~\ref{ap:Benchmarks} for benchmark details and metric definitions.):
\emph{Perturbation-based methods:} \textbf{TimeX++} \cite{TIMEX++_TSX}, \textbf{TIMEX} \cite{TimeX2023Queen_TSX},\textbf{Dynamask} \cite{dynamicmasks2021_TSX} and \textbf{CoRTX} \cite{CoRTX}. 
\emph{Gradient-based methods:} \textbf{Integrated Gradients (IG)} \cite{IG2017axiomatic}, \textbf{WinIT} \cite{winit}, \textbf{MILLET} \cite{MILLET}.

%%%%%%%%%%%%%%%%%%

\begin{table*}[t]
\scriptsize 
    \centering
     \begin{minipage}[t]{0.66\linewidth}
    \caption{Saliency evaluation on synthetic datasets using Transformers as black-box model. Dark \textbf{\textcolor{myblue}{blue}} marks better values. }
    \resizebox{1\columnwidth}{!}{
    \begin{sc}
    \setlength{\tabcolsep}{6pt}
    \begin{tabular}{l|c c c| c c c }
     \toprule
     & \multicolumn{3}{c|}{MCC-E} & \multicolumn{3}{c}{MCC-H} \\
     Method & AUPRC & AUP & AUR & AUPRC & AUP & AUR \\
     \midrule
     IG \cite{IG2017axiomatic} & \cellcolor{myblue!39.23}0.3630\std{0.0052} & \cellcolor{myblue!70.14}0.4825\std{0.0077} & \underline{\cellcolor{myblue!62.72}0.4671}\std{0.0053} & \cellcolor{myblue!63.32}0.4363\std{0.0058} & \cellcolor{myblue!71.34}0.5100\std{0.0061} & \cellcolor{myblue!65.98}0.5793\std{0.0053} \\
     Dynamask \cite{dynamicmasks2021_TSX} & \cellcolor{myblue!31.38}0.3251\std{0.0046} & \cellcolor{myblue!67.17}0.4722\std{0.0098} & \cellcolor{myblue!0.00}0.0711\std{0.0026} & \cellcolor{myblue!45.69}0.3506\std{0.0034} & \cellcolor{myblue!22.28}0.1832\std{0.0023} & \cellcolor{myblue!11.17}0.1574\std{0.0034} \\
     WinIT \cite{winit} & \cellcolor{myblue!0.46}0.1675\std{0.0044} & \cellcolor{myblue!0.00}0.1575\std{0.0113} & \cellcolor{myblue!57.79}0.3861\std{0.0066} & \cellcolor{myblue!0.00}0.1482\std{0.0036} & \cellcolor{myblue!0.00}0.0974\std{0.0054} & \cellcolor{myblue!63.32}0.4442\std{0.0073} \\
     CoRTX \cite{CoRTX} & \underline{\cellcolor{myblue!69.90}0.4762}\std{0.0055} & \underline{\cellcolor{myblue!77.27}0.5238}\std{0.0174} & \cellcolor{myblue!60.62}0.4589\std{0.0157} & \cellcolor{myblue!89.60}0.6195\std{0.0036} & \cellcolor{myblue!75.02}0.5358\std{0.0159} & \cellcolor{myblue!76.10}0.5428\std{0.0033} \\
     MILLET \cite{MILLET} & \cellcolor{myblue!1.90}0.1730\std{0.0057} & \cellcolor{myblue!18.10}0.2178\std{0.0071} & \cellcolor{myblue!3.32}0.1641\std{0.0072} & \cellcolor{myblue!32.16}0.3016\std{0.0046} & \cellcolor{myblue!39.94}0.2741\std{0.0082} & \cellcolor{myblue!43.35}0.2951\std{0.0034} \\
     TimeX \cite{TimeX2023Queen_TSX} & \cellcolor{myblue!40.90}0.3691\std{0.0042} & \cellcolor{myblue!29.44}0.3207\std{0.0038} & \textbf{\cellcolor{myblue!100.00}0.6313}\std{0.0033} & \cellcolor{myblue!65.62}0.4436\std{0.0029} & \underline{\cellcolor{myblue!93.89}0.6668}\std{0.0055} & \cellcolor{myblue!46.31}0.3649\std{0.0031} \\
     \modelname \cite{TIMEX++_TSX} & \cellcolor{myblue!40.52}0.3676\std{0.0038} & \cellcolor{myblue!73.92}0.4998\std{0.0064} & \cellcolor{myblue!43.44}0.2801\std{0.0037} & \underline{\cellcolor{myblue!91.83}0.6393}\std{0.0041} & \cellcolor{myblue!92.21}0.6411\std{0.0038} & \cellcolor{myblue!71.63}0.4879\std{0.0015} \\
     \midrule
     ShapeX\_SF & \cellcolor{myblue!4.26}0.1832\std{0.0024} & \cellcolor{myblue!2.56}0.1709\std{0.0061} & \cellcolor{myblue!14.82}0.2215\std{0.0043} & \cellcolor{myblue!20.75}0.2515\std{0.0013} & \cellcolor{myblue!7.08}0.1511\std{0.0061} & \underline{\cellcolor{myblue!97.83}0.7210}\std{0.0057} \\
     \shapex & \textbf{\cellcolor{myblue!100.00}0.6407}\std{0.0036} & \textbf{\cellcolor{myblue!84.69}0.5614}\std{0.0076} & \cellcolor{myblue!40.34}0.3679\std{0.0050} & \textbf{\cellcolor{myblue!100.00}0.8113}\std{0.0013} & \textbf{\cellcolor{myblue!97.65}0.6838}\std{0.0054} & \textbf{\cellcolor{myblue!100.00}0.7431}\std{0.0055} \\ \midrule \midrule

     & \multicolumn{3}{c|}{MTC-E} & \multicolumn{3}{c}{MTC-H} \\
     Method & AUPRC & AUP & AUR & AUPRC & AUP & AUR \\
     \midrule
     IG \cite{IG2017axiomatic} & \cellcolor{myblue!11.69}0.1467\std{0.0011} & \cellcolor{myblue!11.74}0.1468\std{0.0027} & \underline{\cellcolor{myblue!98.52}0.5402}\std{0.0041} & \cellcolor{myblue!34.49}0.3096\std{0.0030} & \cellcolor{myblue!62.63}0.5317\std{0.0061} & \cellcolor{myblue!52.91}0.4963\std{0.0059} \\
    Dynamask \cite{dynamicmasks2021_TSX} & \cellcolor{myblue!0.0}0.1388\std{0.0006} & \cellcolor{myblue!0.0}0.1010\std{0.0020} & \cellcolor{myblue!41.60}0.2796\std{0.0043} & \cellcolor{myblue!23.63}0.2596\std{0.0022} & \cellcolor{myblue!52.69}0.4955\std{0.0074} & \cellcolor{myblue!0.0}0.1894\std{0.0034} \\
    WinIT \cite{winit} & \cellcolor{myblue!7.07}0.1407\std{0.0029} & \cellcolor{myblue!3.17}0.0914\std{0.0053} & \cellcolor{myblue!0.0}0.0053\std{0.0062} & \cellcolor{myblue!0.0}0.1504\std{0.0038} & \cellcolor{myblue!0.0}0.0914\std{0.0045} & \cellcolor{myblue!45.36}0.4624\std{0.0060} \\
    CoRTX \cite{CoRTX} & \cellcolor{myblue!30.87}0.1875\std{0.0061} & \cellcolor{myblue!28.93}0.1749\std{0.0096} & \cellcolor{myblue!95.53}0.5259\std{0.0171} & \cellcolor{myblue!20.44}0.2428\std{0.0101} & \cellcolor{myblue!19.80}0.2405\std{0.0143} & \underline{\cellcolor{myblue!98.91}0.5430}\std{0.0056} \\
    MILLET \cite{MILLET} & \cellcolor{myblue!8.33}0.1419\std{0.0053} & \cellcolor{myblue!5.83}0.1103\std{0.0037} & \cellcolor{myblue!26.56}0.1550\std{0.0153} & \cellcolor{myblue!12.99}0.2071\std{0.0109} & \cellcolor{myblue!17.06}0.2312\std{0.0025} & \cellcolor{myblue!32.91}0.3518\std{0.0010} \\
    TimeX \cite{TimeX2023Queen_TSX} & \cellcolor{myblue!49.44}0.2479\std{0.0025} & \textbf{\cellcolor{myblue!100.0}0.6301}\std{0.0070} & \cellcolor{myblue!24.29}0.0935\std{0.0009} & \cellcolor{myblue!53.34}0.3799\std{0.0016} & \underline{\cellcolor{myblue!96.78}0.8890}\std{0.0023} & \cellcolor{myblue!16.55}0.1542\std{0.0010} \\
    \modelname\cite{TIMEX++_TSX} & \cellcolor{myblue!46.89}0.2424\std{0.0021} & \underline{\cellcolor{myblue!74.57}0.4906}\std{0.0064} & \cellcolor{myblue!38.10}0.2632\std{0.0031} & \underline{\cellcolor{myblue!55.12}0.3903}\std{0.0025} & \textbf{\cellcolor{myblue!100.0}0.8996}\std{0.0014} & \cellcolor{myblue!13.42}0.1448\std{0.0012} \\
    \midrule
    ShapeX\_SF & \underline{\cellcolor{myblue!85.62}0.3569}\std{0.0032} & \cellcolor{myblue!10.74}0.1520\std{0.0036} & \cellcolor{myblue!89.29}0.4611\std{0.0015} & \cellcolor{myblue!54.66}0.3852\std{0.0124} & \cellcolor{myblue!81.71}0.7350\std{0.0046} & \cellcolor{myblue!11.11}0.1384\std{0.0010} \\
    \shapex & \textbf{\cellcolor{myblue!100.0}0.6100}\std{0.0048} & \cellcolor{myblue!57.23}0.3962\std{0.0067} & \textbf{\cellcolor{myblue!100.0}0.5472}\std{0.0082} & \textbf{\cellcolor{myblue!99.99}0.6792}\std{0.0014} & \cellcolor{myblue!48.37}0.4255\std{0.0024} & \textbf{\cellcolor{myblue!100.0}0.9019}\std{0.0041} \\
     \bottomrule
      \end{tabular}
    \end{sc}
    }
\label{tab:saliency_synthetic_transformer}
    \end{minipage}
    \hfill
      \begin{minipage}[t]{0.32\textwidth}
        \caption{Saliency score on the ECG dataset.}
        \setlength{\tabcolsep}{3pt}
        \resizebox{\textwidth}{!}{
        \begin{sc}
        \begin{tabular}{l c c c}
         \toprule
         Method & AUPRC & AUP & AUR \\
         \midrule
         IG & \cellcolor{myblue!27.67}{0.4182}\std{0.0014} & \cellcolor{myblue!38.29}{0.5949}\std{0.0023} & \cellcolor{myblue!36.09}0.3204\std{0.0012} \\
         Dynamask & \cellcolor{myblue!6.25}0.3280\std{0.0011} & \cellcolor{myblue!20.64}0.5249\std{0.0030} & \cellcolor{myblue!0.00}0.1082\std{0.0080} \\
         WinIT & \cellcolor{myblue!0.76}0.3049\std{0.0011} & \cellcolor{myblue!0.00}0.4431\std{0.0026} & \cellcolor{myblue!40.69}{0.3474}\std{0.0011} \\
         CoRTX & \cellcolor{myblue!17.05}0.3735\std{0.0008} & \cellcolor{myblue!13.55}0.4968\std{0.0021} & \cellcolor{myblue!33.15}0.3031\std{0.0009} \\
         MILLET  & \cellcolor{myblue!0.00}{0.3017}\std{0.0024} & \cellcolor{myblue!7.32}{0.4721}\std{0.0062} & \cellcolor{myblue!34.29}{0.3098}\std{0.0008} \\
         TimeX & \cellcolor{myblue!28.09}{0.4721}\std{0.0018} & \cellcolor{myblue!34.98}{0.5663}\std{0.0025} & \cellcolor{myblue!59.61}{0.4457}\std{0.0018} \\
         \modelname & \cellcolor{myblue!70.98}\underline{0.6599}\std{0.0009} & \cellcolor{myblue!83.88}\underline{0.7260}\std{0.0010} & \cellcolor{myblue!62.48}\underline{0.4595}\std{0.0007} \\
         \midrule
         ShapeX\_SF & \cellcolor{myblue!28.15}{0.4723}\std{0.0012} & \cellcolor{myblue!73.54}{0.6851}\std{0.0047} & \cellcolor{myblue!37.51}{0.3274}\std{0.0034} \\
         ShapeX & \cellcolor{myblue!100.0}\textbf{0.7228}\std{0.0028} & \cellcolor{myblue!100.0}\textbf{0.8395}\std{0.0030} & \cellcolor{myblue!100.0}\textbf{0.6961}\std{0.0032} \\
         \bottomrule
    \end{tabular}

        \end{sc}
        }\label{tab:saliency_ecg}
       
 \caption{Ablation on ECG.}
        \setlength{\tabcolsep}{3pt}
        \resizebox{\textwidth}{!}{
        
        \begin{tabular}{l c c c}
             \toprule
             Ablations & AUPRC & AUP & AUR \\
             \midrule
             Full & \textbf{0.7225}\std{0.0028} & \textbf{0.8399}\std{0.0030} & {0.6958}\std{0.0032} \\
             \textbf{w/o M} & 0.2398\std{0.0022} &  0.2055\std{0.0019} & \underline{0.7117}\std{0.0041} \\
             \textbf{w/o D} & 0.3671\std{0.0030} & 0.4973\std{0.0047} & 0.3619\std{0.0022} \\
             \textbf{w/o SE} & 0.2073\std{0.0019} & 0.1597\std{0.0014} & \textbf{0.7504}\std{0.0037} \\
             \textbf{w/o LIN} & 0.7085\std{0.0030} & \underline{0.7739}\std{0.0036} & 0.6904\std{0.0032} \\
           \textbf{w/o SEG} & \underline{0.6975}\std{0.0026} &  0.7321\std{0.0033} & 0.3847\std{0.0019} \\
             \bottomrule 
             \multicolumn{4}{l}{\textbf{M}: matching loss ; \textbf{D}: diversity loss ; \textbf{SE}: shapelet encoder } \\
             \multicolumn{4}{l}{ \textbf{LIN}: LINEAR; \textbf{SEG}: SEGMENT.} \\
        \end{tabular}
        }
        \label{tab:ecg_ab}
    \end{minipage}
\end{table*}

\begin{figure}[ht]
    \centering
    \begin{minipage}[t]{0.48\linewidth}
        \centering
        \includegraphics[width=\linewidth]{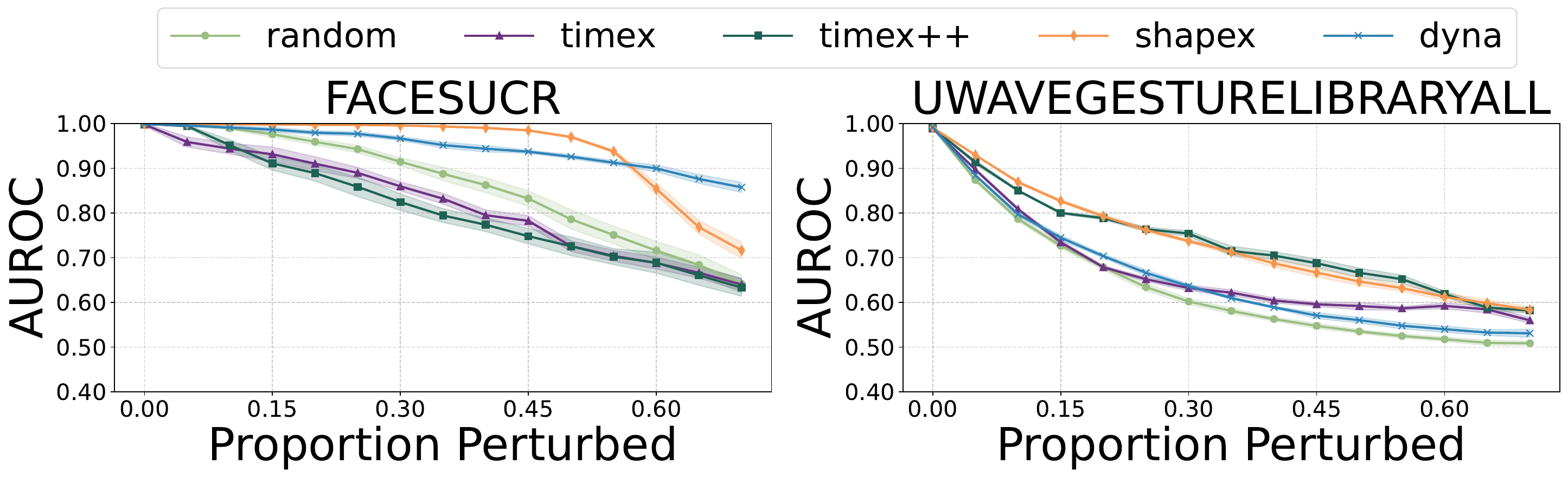} \vspace{-20pt}
        \caption{Occlusion results under different perturbation ratios, where higher AUROC indicates better saliency.}
        \label{fig:vis_oc}
    \end{minipage}
    \hfill
    \begin{minipage}[t]{0.48\linewidth}
        \centering
        \includegraphics[width=\linewidth]{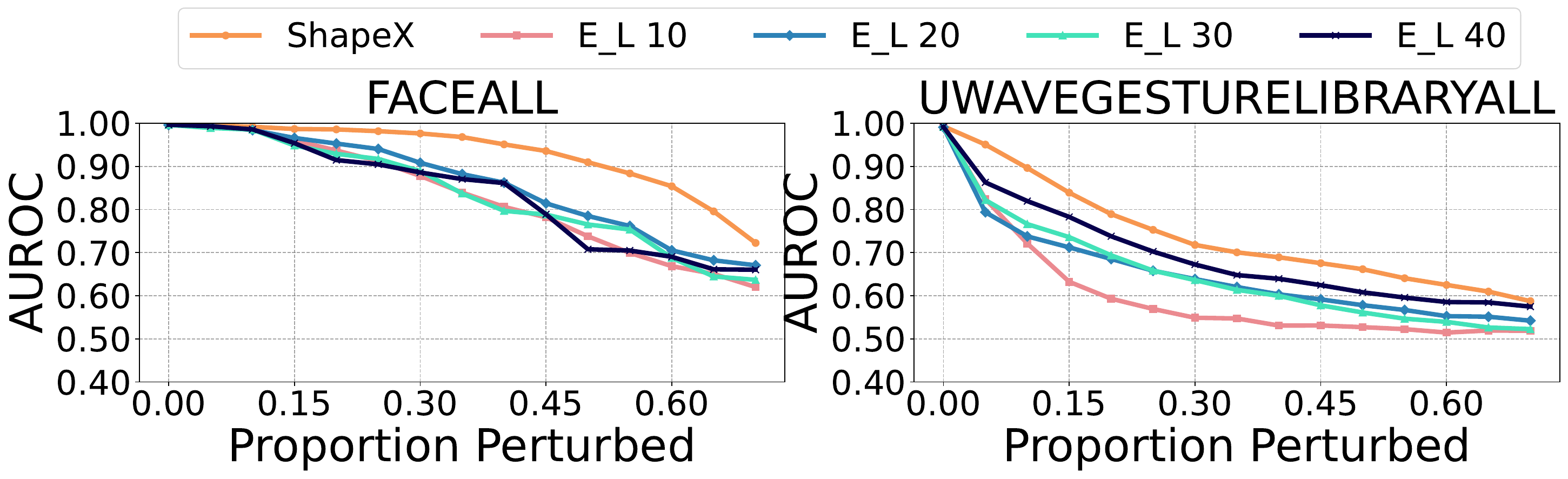}\vspace{-20pt}
        \caption{Ablation study on \shapex\ by replacing SDSL with equal-length segmentation, leading to performance drop.}
        \label{fig:vis_oc_eq}
    \end{minipage}\vspace{-15pt}
\end{figure}

\begin{figure}[ht]
    \centering
    \subfigure[]{
        \includegraphics[width=0.48\linewidth]{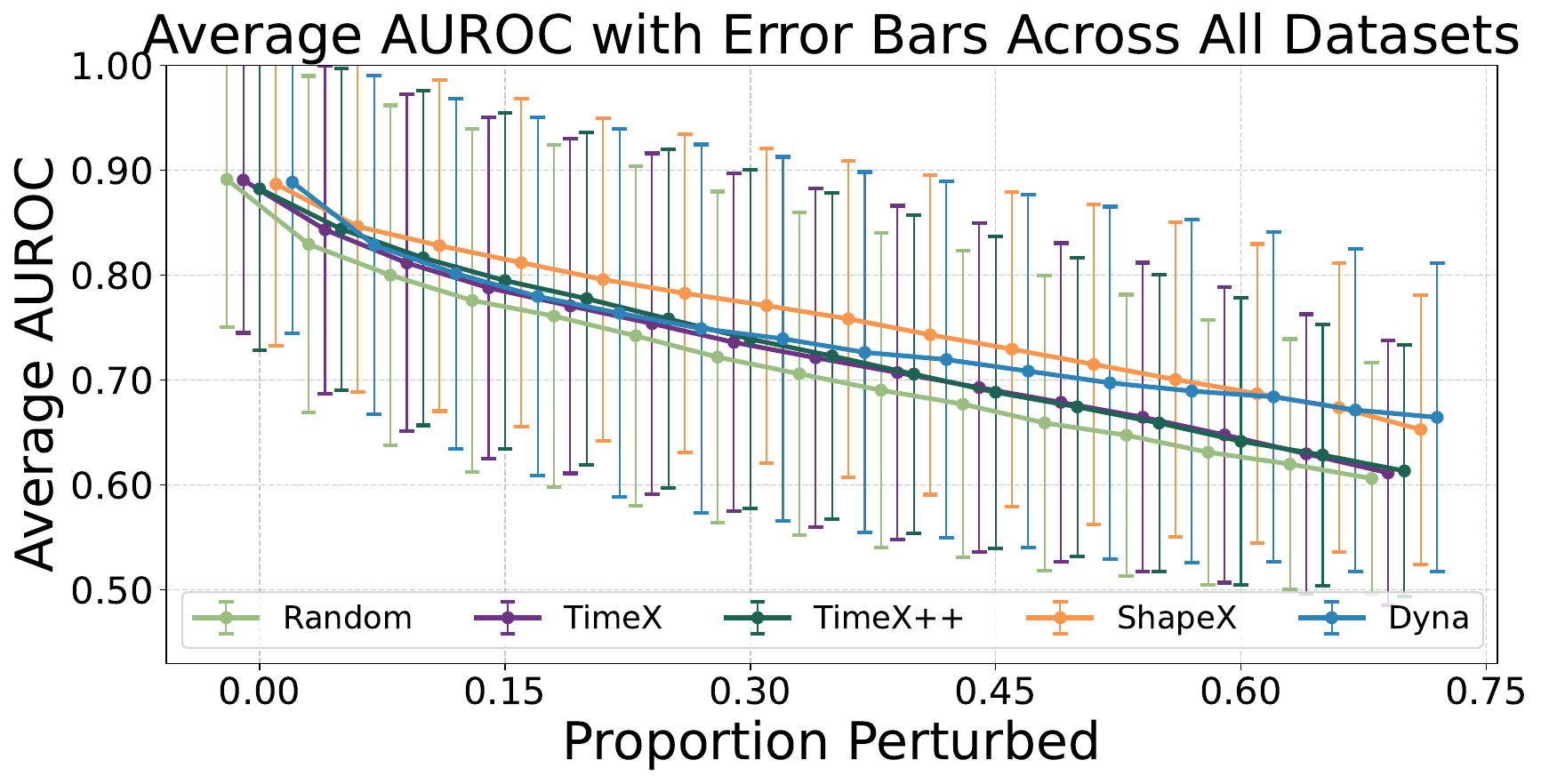}
        \label{fig:vis_oc_errorbar}
    }
    \subfigure[]{
        \includegraphics[width=0.48\linewidth]{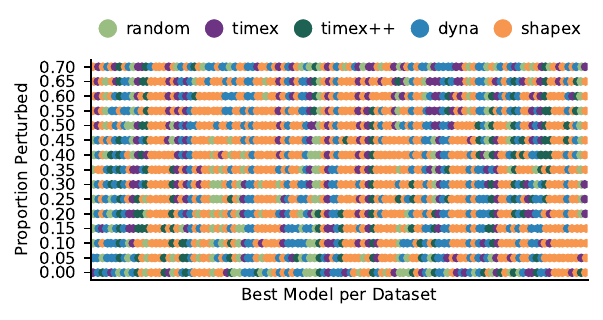}
        \label{fig:vis_best_grid}
    }\vspace{-5pt}
    \caption{Occlusion evaluation results on the full UCR archive. 
(a) Average AUROC with standard deviation across all datasets under varying perturbation ratios. 
(b) Distribution of the best-performing explainability method across datasets under varying perturbation ratios. Each row corresponds to a perturbation level, and each colored dot represents the method achieving the highest performance on a specific dataset at that level.
}
    \label{fig:vis_occlusion_combined}
\end{figure}

%%%%%%%%%%%%%%%%%%
\subsection{Saliency Score Evaluation  on Synthetic Datasets and Real-world Datasets}
In this section, we directly evaluate the generated saliency score by each model  on the synthetic dataset and the real-world dataset.  We also report \textsc{Shapex\_SF}, using ShapeFormer~\cite{ShapeFormer} in place of SDD (see Subsection~\ref{sec:ab} for details).
In Table \ref{tab:saliency_synthetic_transformer}, \shapex\ surpasses the second-best method in AUPRC by an average of 58.12\%, with the largest gain of 146.07\% on MTC-E. For AUR, it achieves an average improvement of 21.09\%. However, in AUP, \shapex\ shows a slight performance decline.
We make the following observations: The consistent enhancement of \shapex\ in AUPRC indicates that it effectively balances the identification of complete salient sequences while maintaining high accuracy. 
Notably, \timex\ and \timexp\ perform significantly worse on the MCC-E and MTC-E datasets, where the motif amplitude is equal to the baseline waveform, compared to the MCC-H and MTC-H datasets, where motifs exhibit higher amplitudes. This suggests that these methods tend to label outliers or anomalies as key features. 
In contrast, \shapex\ remains stable across all datasets, demonstrating its ability to identify key time series patterns relevant to classification rather than relying on extreme amplitude variations.  Table \ref{tab:saliency_ecg} shows that \shapex\ also consistently outperforms all baselines, with a notable AUR improvement in ECG dataset. To explore how different black-box classifiers affect saliency outcomes, we also conduct further evaluations using a range of models, from standard LSTM \cite{gers2002learning} and CNN \cite{cnn2016multi} to the state-of-the-art MultiRocket \cite{tan2022multirocket}. Results are presented in Appendix~\ref{ap:ad_ex}.

\subsection{Occlusion Experiments on Real-world Datasets}

We follow the occlusion protocol from \cite{TimeX2023Queen_TSX}, which perturbs the bottom-$k$ timesteps ranked by saliency scores generated from explanation methods. By progressively perturbing less important regions, we observe how quickly the model's classification accuracy degrades. A reliable explanation model should exhibit a slow and smooth performance drop---indicating that high saliency regions truly capture the key discriminative patterns, while a steep drop suggests poor alignment between the saliency map and the underlying predictive logic.

Figure~\ref{fig:vis_oc} illustrates occlusion results on two typical UCR datasets, serving as visual examples of the evaluation process rather than highlighting best-case performance. Full occlusion results for all datasets are included in Appendix~\ref{ap:oc}.
Figure~\ref{fig:vis_occlusion_combined} summarizes occlusion results across 112 datasets from the UCR archive. Subfigure~(a) presents the average AUROC under different perturbation ratios, where \shapex\ consistently achieves the highest scores and exhibits the most stable degradation curve, indicating superior robustness. Subfigure~(b) reports the frequency with which each method ranks first across all datasets and occlusion levels, further confirming the reliability of \shapex.

\textit{To our knowledge, this is the first PHTSE work to conduct explanation evaluation over the entire UCR archive}, covering a wide range of real-world domains, sequence lengths, and class granularities. The results not only validate the strong generalization ability of \shapex\ but also demonstrate its unmatched consistency and reliability across diverse conditions.

\begin{wrapfigure}{r}{0.35\linewidth}  % r: 
    \centering
    \vspace{-15pt} % 
    \includegraphics[width=1\linewidth]{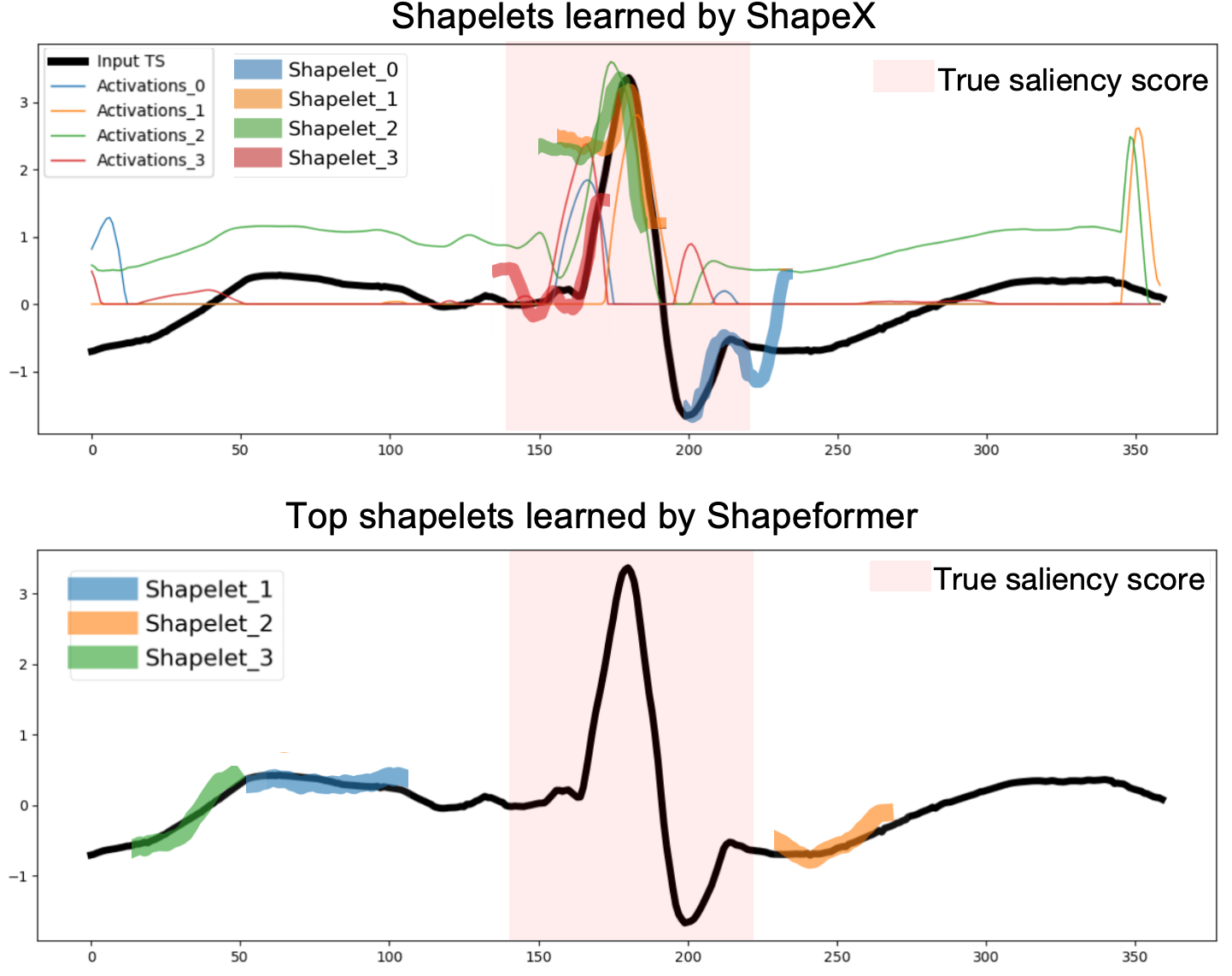}\vspace{-15pt}
    \caption{Visualization of learned shapelets in ECG dataset.}
    \label{fig:vis_shape}
    \vspace{-15pt} % 
\end{wrapfigure}

\subsection{Ablation Study} \label{sec:ab}
We conduct ablation studies on both real-world and synthetic datasets to assess the contributions of different components in \shapex. The results on the ECG dataset are shown in Table~\ref{tab:ecg_ab}, while additional results on synthetic datasets are provided in Appendix~\ref{ap:ad_ex}. As the trends are consistent, we focus our discussion on the ECG setting.

We first evaluate \textsc{Shapex\_SF}, a variant that replaces the proposed SDD module with ShapeFormer~\cite{ShapeFormer} for shapelet learning, while keeping the downstream explanation pipeline unchanged. Despite ShapeFormer's ability to model local patterns, this variant performs worse in saliency evaluation, highlighting that shapelets must be learned in an explanation-aware manner to be effective for post-hoc interpretation.
Next, we ablate key components of the SDD module. Specifically, we remove the two loss functions that guide morphological learning—the matching loss and the diversity loss—as well as the shapelet encoder. \textsc{w/o linear} replaces the linear perturbation with a zero-mask perturbation, and \textsc{w/o segment} directly uses the shapelet activation map as the saliency score without segment-based aggregation.

From the results, we observe that the loss components have the most significant impact on model performance, confirming that the learned shapelets must be well-aligned with the input, temporally smooth, and morphologically diverse. Furthermore, the sharp drop in AUR for \textsc{w/o segment} validates the necessity of segment-level aggregation for capturing complete key subsequences.

To assess the impact of shapelet-driven segmentation, we perform an ablation study involving occlusion experiments on two real-world datasets.
Specifically, we replace the SDSL perturbation in \shapex\ with equally spaced segments of length \( N \), denoted as \( E\_L\ N\). The results are shown in Figure \ref{fig:vis_oc_eq}.
\shapex\ significantly outperforms other variants. These results indicate that the effectiveness of \shapex\ is fundamentally dependent on SDSL perturbation.

\subsection{Shapelet Visualization and Case Study}
\paragraph{Visualization.}
ShapeFormer \cite{ShapeFormer} is a leading shapelet learning method. We visualize and compare the shapelets generated by ShapeFormer and \shapex, as shown in Figure \ref{fig:vis_shape}. In accordance with the initial configuration of ShapeFormer, we choose the shapelets it generates that have the highest PSD scores within the dataset.
The results show that ShapeFormer’s shapelets primarily match subsequences within the time series, while \shapex\ learns shapelets critical for classification. Specifically, \shapex's shapelets align with saliency labels, emphasizing their relevance to classification. This indicates that only \shapex's shapelets effectively support downstream explanations.

\paragraph{Case Study.}

Additionally, case studies on three real-world datasets (Appendix~\ref{ap:case}) further validate \shapex’s causal fidelity and interpretive precision. In each case, \shapex\ produces saliency distributions that align closely with the ground-truth causal regions verified by domain knowledge or prior annotations. For example, on the {PhalangesOutlinesCorrect} dataset, the model accurately highlights the transitional growth plate region that determines bone maturity, while other baselines yield dispersed and noisy saliency maps without semantic correspondence. Similarly, for the {FaceAll} dataset, \shapex\ concentrates attention around facial boundaries and expression-related areas—key morphological cues for class discrimination, demonstrating its ability to capture meaningful shape transitions rather than texture noise. Finally, in motion-related datasets such as UWaveGestureLibraryAll~\cite{uwave2009}, \shapex\ highlights subsequences with decreasing acceleration, which correspond to the transitional phases of gestures such as sharp turns or circular motions. 
Overall, \shapex\ identifies truly causal segments, while other methods focus on random or uninformative regions.

\section{Conclusion } \label{sec:conclusion}

\shapex\ introduces an innovative shapelet-driven approach for explaining time series classification. 
By leveraging the SDD framework, \shapex\ effectively learns representative and diverse shapelets, integrating Shapley value to assess their contribution to classification outcomes. 
Experimental results demonstrate that \shapex\ not only enhances precision in identifying key subsequences but also provides explanations with stronger causal fidelity. 
Its effectiveness across synthetic and real-world datasets highlights its potential for improving interpretability in critical applications such as healthcare and finance. 

However, several limitations remain. 
The framework involves a few user-defined hyperparameters, such as the number and length of shapelets and the threshold that determines their selection. 
These parameters control the granularity and diversity of extracted shapelets, which in turn influence both interpretability and quantitative performance. 
While they offer flexibility across tasks, they may also require dataset-specific tuning and manual calibration to achieve optimal results, limiting the method’s plug-and-play usability on unseen domains. 

Meanwhile, \shapex\ relies on a separate training phase to learn shapelets and optimize the SDD module, which introduces additional computational overhead compared to purely gradient-based methods. 
This two-stage design ensures interpretability and robustness but sacrifices some efficiency and adaptability. 
Future work could address these limitations through automated hyperparameter selection, lightweight or joint-training strategies, and generalized formulations for multivariate and irregular time series.

\section*{Acknowledgments}
We gratefully acknowledge the UCR Time Series Classification Archive \cite{ucr2019}
for providing the benchmark datasets used in this study. 
S. Pan was partially funded by Australian Research Council (ARC) under grants FT210100097 and DP240101547 and the CSIRO – National Science Foundation (US) AI Research Collaboration Program.
This work was also partially supported by the NVIDIA Academic Grant in Higher Education and the NVIDIA Developer Program.

%%%%%%%%%%%%%%%%%%%%%%%%%%%%%%%%%%
\bibliographystyle{unsrtnat}

%% APPENDIX  %%%%%%%%%%%%%%%%%%%%%%%%%%%%%%%%%%%%%%%%%%%%%%%%%%%%%%%%%%%%%%%%%%%%%%%%%%%%%%
\newpage
\appendix
\onecolumn

%%%%%%%%%%%%%%%%%%%%%%%%%%%%%%%%%%%%%%%%%%%%%%%%%%%%%%%%%%%%%%%%%%%%%%%%%%%%%%%
%\input{checklist}

\appendix
\onecolumn

\section{Supplementary Background: Perturbation Paradigms and Shapelets}\label{ap:background}

\subsection{Perturbation-based Explanation Methods}
\label{sec:pert_based}
Perturbation-based methods derive the saliency of input features by analyzing how perturbations to those features affect the model’s output. 
Let $P(X, M)$ represent a perturbation function that modifies the input $X$ using a mask $M = [M_1, M_2, \dots, M_T]$, where $M_t \in [0, 1]$. The perturbed $X’$ is defined as:
\begin{equation}
X'_t \coloneqq P(X_t, M_t) = M_t \cdot X_t + (1 - M_t) \cdot B_t,
\label{eq:pert}
\end{equation}
where $B_t$ denotes a baseline value (e.g., zero, mean, or noise).
A general formulation of perturbation-based explanation methods to evaluate saliency is given by:
\begin{equation}
R \propto\mathbb{E}_{M \sim \mu} \Big[G\left( f(X)-f(P(X, M))\right) \Big],
\label{eq:per_based}
\end{equation}  

where $\mu$ is the distribution of the perturbation mask, and $G(\cdot)$ represents some transformation function.
Two primary subclasses can be utilized to instantiate this method: the \emph{Optimized Mask} and \emph{Fixed Mask} methods. 

\subsubsection{Two Primary Subclasses of Perturbation-Based Explanation Methods}
\label{ap:two_subclass}
\paragraph{Optimized Mask Perturbation Methods.}
In these methods,  mask $M$ is treated as a learnable parameter. The optimal mask $M^*$ is obtained by solving an optimization problem \cite{dynamicmasks2021_TSX,TimeX2023Queen_TSX,TIMEX++_TSX}, and the optimized mask $M^*$ directly represents the saliency score, with $R_{t} = M^*_{t}$. 
	
\paragraph{Fixed Mask Perturbation Methods.}
wherein $M$ is a predefined, fixed mask (e.g., masking one time step or a segment). The saliency score $R$ is defined as:
\begin{equation}
R= \mathbb{E}_{M \sim \mathcal{M}} \Big[ \big| f(X) - f(P(X, M)) \big| \Big],
\end{equation}
where $\mathcal{M}$ is the distribution of the mask $M$, and $| \cdot |$ denotes a norm, typically $L_1$ or $L_2$, to measure the change in the model’s output.

Within this field, the Shapley value is frequently employed to determine the marginal contribution of each unmasked element (as a player in Shapley value) to a model's behavior, which can then be converted to the saliency score.
This branch primarily focuses on Perturbing equal-length segments \cite{WindowSHAP2023,zhang2023shaptime}. Refer to Appendix \ref{ap:SV_mask} for additional information on using equal-length perturbed segments to compute Shapley value.

\subsection{Applying Shapley Value to Fixed Mask Perturbation}
\label{ap:SV_mask}
Shapley value \cite{shapley1953value} offers a principled approach to measure the importance of each segment by averaging their marginal contributions across all possible subsets of segment indices.
In fixed mask perturbation methods, segments of the time series are treated as participants, and the Shapley value is computed using the model's output as the value function.
The saliency scores derived from Shapley value provide interpretable and consistent importance measures for segments or individual time steps within the time series.

Under the fixed mask perturbation framework,  suppose the time series $X = \{X_1, \dots, X_T\}$ is equally divided into $n$ segments $\{E_1, E_2, \dots, E_n\}$. Each segment $E_i$ acts as a ``participant" that can either be retained (unmasked) or masked with a baseline value.
Define the mask $M \in \{0,1\}^n$, where $M_i = 1$ indicates that segment $E_i$ is retained, and $M_i = 0$ indicates it is masked. For a subset of segment indices $u \subseteq \mathcal{N}$, the value function is defined as:
\begin{equation}
v(u) = f\left(P\left(X, M_u\right)\right),
\end{equation}
where $M_u$ is a mask that retains segments with indices in $u$ and masks all other segments.

The Shapley value $\phi_i$ for each segment $E_i$ is then computed as:
\begin{equation}
\phi_i = \sum_{u \subseteq \mathcal{N} \setminus \{i\}} \frac{|u|! \, (n - |u| - 1)!}{n!} \left[ f\left(P\left(X, M_{u \cup \{i\}} \right)\right) - f\left(P\left(X, M_u \right)\right) \right].
\end{equation}

This equation captures the average marginal contribution of segment $E_i$ across all possible subsets $u$ of segment indices that do not include $i$. 
The resulting $\phi_i$ serves as the saliency score for segment $E_i$. To obtain saliency scores for individual time steps $t$, distribute the segment-level Shapley value uniformly across the time steps within each segment:

\begin{equation} \label{eq:R_t}
R_t = \frac{|\phi_i|}{|E_i|}, \quad \text{for } t \in E_i.
\end{equation}

As the general form of fixed mask perturbation methods is given by:

\[
R_{t} = \mathbb{E}_{M \sim \mathcal{M}} \left[ \left| f(X) - f\left(P(X, M)\right) \right| \right],
\]

where $\mathcal{M}$ is the distribution over masks. By leveraging the Shapley value, the distribution $\mathcal{M}$ can be interpreted as considering all possible subsets of segment indices with equal probability. Thus, the Shapley value-based saliency scores $\phi_i$ provide a theoretically grounded method to estimate the importance of each segment in the fixed mask perturbation framework.

\subsection{Timestep-level and Segment-level Perturbation}
Previously, we categorized perturbation-based explanation methods \cite{TIMEX++_TSX,TimeX2023Queen_TSX,dynamicmasks2021_TSX,ContraLSP_TSX,WindowSHAP2023,zhang2023shaptime} based on whether the mask was fixed.
From another perspective, we classify them by the minimal element in the perturbation mask: \textbf{timestep-level} and \textbf{segment-level}, indicating whether the mask is applied to individual timesteps or segments.
All the optimized mask methods are timestep-level, while most fixed mask methods are segment-level.

However, timestep-level perturbation directly disrupts the local dynamic dependencies, while equal length segment-level perturbation may disrupt the integrity of critical subsequences by splitting them across adjacent segments or combining unrelated subsequences into a single segment, refer to Figure \ref{fig:level}. This can lead to fragmentation of meaningful patterns and generation of spurious segments, yielding suboptimal explanations.

\section{Additional Related Works and Discussion}\label{ap:related_works}

\textbf{Explainable Artificial Intelligence.} Explainable Artificial Intelligence (XAI) aims to make AI systems' decisions understandable and auditable \cite{arrieta2020explainable}. The growing complexity of deep learning models necessitates XAI methods to address safety, fairness, and accountability concerns.  XAI is generally categorized into in-hoc and post-hoc explainability methods.

In-hoc explainability methods are inherently interpretable by design. In the field of computer vision, in-hoc methods include such as interpretable representation learning \cite{thislooks2019nips,prototypes2019AAAI}, and model architectures with intrinsic interpretability \cite{self-explaining2018}. 
% For example, partially interpretable neural networks, such as Capsule Networks, enhance model interpretability by capturing spatial hierarchical relationships \cite{9}, while interpretable attention mechanisms highlight the most critical regions in an image for classification decisions \cite{10}.
Mean while, transparent generative models, such as interpretable Generative Adversarial Networks (GANs), are also used to provide visual explanations for image generation processes \cite{interpretableGAN2022}. In natural language processing, in-hoc methods leverage interpretable attention mechanisms to reveal the relationship between input text and output predictions \cite{attention2020nlp}. For tabular data, in-hoc methods often rely on comprehensible models like decision trees or linear models \cite{tabular2022deep}.

Post-hoc explainability methods provide explanations for complex models after training, making them applicable to various data modalities.
Among these, Shapley value, derived from game theory, is widely used in XAI to quantify the contribution of each feature to model predictions \cite{shapelets2009ye}. The advantage of Shapley value lies in its adherence to principles of fairness, equitability, and consistency. Methods based on Shapley value, such as SHAP (SHapley Additive exPlanations), approximate Shapley value to offer both global and local explanations for complex models \cite{SHAP2017}. 
In computer vision, SHAP identifies the pixels or regions in an image that have the greatest influence on classification outcomes, helping to understand the key areas the model focuses on \cite{Shap-CAM2022}. In natural language processing, Shapley value are used to evaluate the contribution of each word to text classification or generation tasks, revealing the basis for the model's decisions \cite{nlp2022shap}. 

Besides Shapley value, post-hoc methods include techniques such as LIME \cite{lime2016should}, Grad-CAM \cite{Grad-cam}, Layer-wise Relevance Propagation (LRP) \cite{LRP2019layer}, and Integrated Gradients \cite{IG2017axiomatic}. These methods provide fine-grained explanations without altering the original model structure, aiding in the understanding of feature extraction and decision-making processes.

Through these methods, the application of XAI across various data modalities not only enhances the transparency and interpretability of models but also strengthens user trust in AI systems, laying a solid foundation for their deployment and adoption in real-world applications.

%%%%%%%%%%%%%%%%
\textbf{Time Series Explainability.} Explainability methods for time series models are broadly categorized into in-hoc and ad-hoc approaches. 
Early survey and benchmarking studies have established the foundation of this field. 
Theissler \emph{et al.}~\cite{theissler_explainable_2022} provided a comprehensive taxonomy of explainable AI for time series classification, grouping methods into time-point-, subsequence-, and instance-based explanations, while Ismail \emph{et al.}~\cite{ismail2020benchmarking} systematically benchmarked saliency-based interpretability techniques, revealing their limitations in capturing temporal importance.

In-hoc methods embed interpretability directly into the model, typically via transparent representations such as \emph{Shapelets} or self-explaining architectures. 
For instance, TIMEVIEW adopts a top-down transparency framework to understand both high-level trends and low-level properties, leveraging static features and visualization tools for interpretability~\cite{timeview}, while VQShape introduces a novel representation learning approach based on vector quantization, enabling shapelet-based interpretable classification~\cite{vqshape}. 
Spinnato \emph{et al.}~\cite{spinnato2023understanding} further advanced this direction by developing a subsequence-based explainer capable of generating saliency, instance-level, and rule-based explanations for any black-box time series classifier.

Ad-hoc methods are typically divided into two main categories. 
\emph{Gradient-Based Methods} are mainly derived from IG~\cite{IG2017axiomatic}. For instance, SGT+GRAD refines gradient-based saliency maps by iteratively masking noisy gradient features to improve interpretability~\cite{SGTGRAD}. 
\emph{Perturbation-Based Methods} derive the saliency by observing the effect of perturbation. 
Tonekaboni \emph{et al.}~\cite{tonekaboni2020went} proposed FIT, a framework quantifying instance-wise feature importance via KL-divergence to identify when and where a model’s decision changes over time. 
Dynamask creates dynamic perturbation masks to yield feature importance scores specific to each instance~\cite{dynamicmasks2021_TSX}. 
CoRTX enhances real-time model explanations using contrastive learning to reduce reliance on labeled explanation data~\cite{CoRTX}. 
WinIT improves perturbation-based explainability by explicitly accounting for temporal dependence and varying feature importance over time~\cite{winit}. 
\timex\ learns an interpretable surrogate model that maintains consistency with a pretrained model’s behavior to ensure faithfulness~\cite{TimeX2023Queen_TSX}. 
\timexp\ builds on the information bottleneck principle to generate label-preserving explanations while mitigating distributional shifts~\cite{TIMEX++_TSX}. 

The principal distinction between these methods lies in their interpretability strategies: in-hoc approaches seek intrinsic transparency of the model itself, whereas ad-hoc strategies aim to provide explanations for the behaviors of the model, with perturbation-based techniques currently prevailing in the domain.

%%%%%%%%%%%%%%%%%%%%%%%%%%%%%%%%%%%%%%%%%%%%

\section{Shapelet Econder}
\label{ap:se}
%%%%%%%%%%%

The shapelets $\mathcal{S}$ used in \textit{descriptor} and \textit{detector} are not fixed but learned jointly with the model. To ensure their internal consistency and expressive capacity, we introduce a dedicated encoder that models their temporal structure. Each shapelet is segmented into local patches, which are embedded with positional encoding as in~\cite{PatchTST}. A multi-head self-attention mechanism is then applied to model temporal dependencies across patches, promoting smoothness and internal consistency in shapelet representations. This encoder helps shapelets act as both expressive and stable morphological prototypes.

We first divide the input shapelet sequence $s_m\in \mathbb{R}^{L}$ into several equal-length patches, where each patch captures a continuous segment of temporal features and is mapped into a representation $p_i$ through a linear transformation combined with positional encoding. Specifically, the representation of each patch is given by

\begin{equation}
p_i = \mathrm{Linear}(s_i) + PE(i),
\end{equation}

where $\mathrm{Linear}(\cdot)$ denotes the linear mapping operation and $PE(i)$ is the positional encoding that preserves the position information of the patch. Next, we apply a multi-head attention module to perform information exchange and fusion among all patch representations. For the set of patch representations ${p_1, p_2, \ldots, p_N}$ obtained after the linear transformation, the attention is computed using the following formula:

\begin{equation}
\mathrm{Attention}(Q,K,V) = \mathrm{softmax}\left(\frac{QK^{T}}{\sqrt{d_k}}\right)V,
\end{equation}

where $Q$, $K$, and $V$ are the query, key, and value matrices, respectively, obtained through linear projections of $p_i$. The multi-head attention mechanism utilizes multiple sets of these linear projections to capture different feature patterns in parallel, which can be expressed as

\begin{equation}
\mathrm{MultiHead}(Q,K,V) = W^O\left[\mathrm{head}_1;\mathrm{head}_2; \ldots; \mathrm{head}_h\right],
\end{equation}

with each individual attention head computed as

\begin{equation}
\mathrm{head}_i = \mathrm{Attention}(QW_i^Q, KW_i^K, VW_i^V),
\end{equation}

where index $i$ refers to different attention heads. This design enables the Shapelet Encoder to not only preserve the fine-grained information of local patches but also to integrate global temporal dependencies through multi-head attention, thus aiding in the learning of more precise and stable shapelets.

%%%%%%%%%%%%%%%%%%%%%%%%%%%%%%%%%%%%%%%%%%%%%%%%%%%%%%%%%%%%%%%%%%%%%%%%%%%%%%%
% APPENDIX SECTION: CATE_appendix
%%%%%%%%%%%%%%%%%%%%%%%%%%%%%%%%%%%%%%%%%%%%%%%%%%%%%%%%%%%%%%%%%%%%%%%%%%%%%%%

\section{Theoretical Justification for Approximate Causal Attribution} \label{sec:CATE_appendix}
\subsection{Theoretical Analysis} \label{ap:identification}
In this section, we provide a theoretical justification for interpreting the Shapley value computed by \shapex\ as an approximation of the model-level CATE. We begin by introducing the necessary notation. Let \( z \in \{0,1\} \) indicate whether a segment indexed by \(n\) is retained (\(z=1\)) or masked (\(z=0\)), and let \( \mathcal{G}' \subseteq \mathcal{G} \setminus \{n\} \) denote a coalition of other retained segments, where \( \mathcal{G} = \{1, \dots, N\} \) is the index set of all shapelet-aligned segments in the input.

We define the potential model outcomes under each intervention as follows:
\begin{align}
Y(1, \mathcal{G}') &= f(X_{\mathcal{G}' \cup \{n\}}), \\
Y(0, \mathcal{G}') &= f(X_{\mathcal{G}'}).
\end{align}
where $Y(1, \mathcal{G}')$ represents the model’s output when segment $n$ is retained alongside the coalition $\mathcal{G}'$, and $Y(0, \mathcal{G}')$ represents the output when segment $n$ is masked, with only $\mathcal{G}'$ retained.

To establish the causal interpretability of \shapex\ at the model level, we reinterpret classical identification assumptions from causal inference in the space of model inputs~\cite{pearl2016causal,imbens2015causal,rubin2005causal}. Specifically:
\begin{itemize}
\item  \textbf{Consistency \& SUTVA}: Each shapelet-aligned segment is perturbed independently, and the effect of a given segment does not depend on the perturbation status of others. That is, the potential model output under a given intervention depends only on the subset of included segments, consistent with the SUTVA.
\item \textbf{Ignorability}: Given a subset of retained segments $\mathcal{G}'$, the assignment variable $z \in \{0, 1\}$, representing whether segment $n$ is retained ($z = 1$) or masked ($z = 0$), is independent of the potential model outcomes. This holds because $z$ is controlled deterministically through our synthetic perturbation policy.
\item  \textbf{Positivity}: For any \(\mathcal{G}'\), both interventions \(z=0\) and \(z=1\) occur with non-zero probability, ensured by our random sampling over subsets.
\end{itemize} 
Under these assumptions, the model-level CATE $\tau_n^{\text{model}}$ is identifiable. 

We now proceed to prove Proposition~1, which establishes the equivalence between the Shapley value computed by \shapex\ and the model-level CATE. 

\setcounter{proposition}{0}
\begin{proposition}[Shapley Value as Model-Level CATE]
\label{prop:shapley_as_cate}
Let $f(X)$ denote the model’s prediction for input $X$, and let $X_{(S_n)}$ denote the segment aligned with shapelet $n \in \mathcal{G}$, where $\mathcal{G} = \{1, \dots, N\}$ is the index set of all segments. Then, under the intervention defined by retaining or masking segment $X_{(S_n)}$, the Shapley value $\phi_n$ computed as
\begin{equation}
\phi_n = \mathbb{E}_{\mathcal{G}'} \left[  f(X_{\mathcal{G}' \cup \{n\}}) - f(X_{\mathcal{G}'}) \right]
\end{equation}
is equivalent to the model-level CATE $\tau_n^{\text{model}}$, conditioned on context $\mathcal{G}' \subseteq \mathcal{G} \setminus \{n\}$.
\end{proposition}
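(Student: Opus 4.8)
The plan is to recognize that, once the causal framework has been transported into the model's input space, the claimed equivalence reduces to matching the Shapley weighting to a randomization mechanism over coalitions. First I would fix a context $\mathcal{G}'$ and define the conditional model-level treatment effect $\tau_n(\mathcal{G}') = \mathbb{E}[Y(1,\mathcal{G}') - Y(0,\mathcal{G}') \mid \mathcal{G}']$. Using the potential-outcome definitions $Y(1,\mathcal{G}') = f(X_{\mathcal{G}' \cup \{n\}})$ and $Y(0,\mathcal{G}') = f(X_{\mathcal{G}'})$ together with the \textbf{Consistency/SUTVA} assumption—which guarantees that each segment's intervention acts independently and that the potential outcome depends only on the retained subset—this conditional effect collapses to the deterministic difference $f(X_{\mathcal{G}' \cup \{n\}}) - f(X_{\mathcal{G}'})$, with no residual expectation, since $f$ is a fixed map and the assignment $z$ is set externally rather than sampled.

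Second I would invoke \textbf{Ignorability} and \textbf{Positivity} to confirm that $\tau_n(\mathcal{G}')$ is identifiable from the interventional distribution. Ignorability holds trivially, because the synthetic perturbation policy fixes $z$ independently of the potential outcomes, and positivity guarantees every coalition is reachable so that no conditioning event carries zero mass. Together these license replacing the interventional CATE by the observed masked-versus-unmasked difference for every context $\mathcal{G}'$, without the confounding that would otherwise obstruct identification.

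The crux is then to read the Shapley coefficient $w(\mathcal{G}') = \frac{|\mathcal{G}'|!\,(|\mathcal{G}| - |\mathcal{G}'| - 1)!}{|\mathcal{G}|!}$ appearing in Equation~\eqref{eq:shapley_value} as a probability distribution over coalitions $\mathcal{G}' \subseteq \mathcal{G}\setminus\{n\}$. I would verify that these weights sum to one—equivalently, that $w(\mathcal{G}')$ equals the probability that a uniformly random ordering of $\mathcal{G}$ places exactly the members of $\mathcal{G}'$ before $n$—so that the defining Shapley sum is precisely the expectation $\mathbb{E}_{\mathcal{G}'}[\tau_n(\mathcal{G}')]$ under this distribution. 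Combining with the previous step yields $\phi_n = \mathbb{E}_{\mathcal{G}'}[f(X_{\mathcal{G}' \cup \{n\}}) - f(X_{\mathcal{G}'})] = \tau_n^{\text{model}}$, the context-averaged model-level CATE.

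I expect the main obstacle to be conceptual rather than computational: carefully justifying that masking a segment constitutes a legitimate intervention (a $do$-operation) in input space, and that the Shapley weighting furnishes the \emph{surrogate randomization mechanism} referenced immediately after the proposition. In particular, the equality is \emph{exact} only when all coalitions are exhaustively enumerated; under subset sampling I would need to argue unbiasedness separately, by showing that the empirical sampling distribution reproduces the Shapley weights in expectation so that $\mathbb{E}[\hat{\phi}_n] = \tau_n^{\text{model}}$ continues to hold.
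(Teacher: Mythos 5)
Your proposal is correct and follows essentially the same route as the paper's proof: define the potential model outcomes $Y(1,\mathcal{G}') = f(X_{\mathcal{G}' \cup \{n\}})$ and $Y(0,\mathcal{G}') = f(X_{\mathcal{G}'})$, invoke the three identification assumptions to make $\tau_n^{\text{model}}$ identifiable, and conclude by direct substitution into its defining expectation over coalitions. Your additional step of verifying that the Shapley coefficients $\frac{|\mathcal{G}'|!\,(|\mathcal{G}|-|\mathcal{G}'|-1)!}{|\mathcal{G}|!}$ form a probability distribution over $\mathcal{G}' \subseteq \mathcal{G}\setminus\{n\}$ is a useful elaboration of a point the paper takes for granted by writing $\phi_n$ directly as an expectation, but it does not change the argument.
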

\begin{proof}
Given the validity of the three identification assumptions, the model-level CATE $\tau_n^{\text{model}}$ is identifiable. Its definition is given by:
\begin{equation}
\tau_{n}^{\text{model}} = \mathbb{E}_{\mathcal{G}'} \left[ Y(1, \mathcal{G}') - Y(0, \mathcal{G}') \right].
\end{equation}
Substituting the definitions of $Y(1, \mathcal{G}')$ and $Y(0, \mathcal{G}')$, we obtain:
\begin{equation}
\tau_n^{\text{model}} = \mathbb{E}_{\mathcal{G}'} \left[ f(X_{\mathcal{G}' \cup \{n\}}) - f(X_{\mathcal{G}'}) \right] = \phi_n
\end{equation}
Thus, the Shapley value $\phi_n$ is equivalent to the model-level CATE $\tau_n^{\text{model}}$.
\end{proof}

\subsection{Practical Estimation via Coalition Averaging} \label{ap:estimation}

Using the temporally-relational subset strategy, we estimate:
\[
\hat{\tau}_{n}^{\text{model}} = \frac{1}{M} \sum_{m=1}^M \left[ f(X_{\mathcal{G}_m \cup \{n\}}) - f(X_{\mathcal{G}_m}) \right],
\]
where each \( \mathcal{G}_m \subseteq \mathcal{G} \setminus \{n\} \) is sampled from segments temporally connected to \( n \).  
This estimator is unbiased within the restricted coalition space, and reduces complexity from \( O(|\mathcal{G}|!) \) to \( O(|\mathcal{G}|) \).

\paragraph{Approximation Caveat.}
When only a subset of coalitions is used, the estimated CATE converges to a conditional average over a restricted support.  
We refer to this as a \textbf{shapelet-local CATE}, acknowledging that this is an approximation of the full-coalition effect.

\begin{table*}[ht]
\centering
\caption{Overview of Datasets}
\label{tab:datasets}
\resizebox{\textwidth}{!}{
\begin{tabular}{l|l|ccccc|c}
\toprule
\textbf{Category} & \textbf{Dataset Name}            & \textbf{Classes} & \textbf{Testing Samples} & \textbf{Training Samples} & \textbf{Length of Series} & \textbf{Description} & \textbf{True Saliency Score} \\ 
\midrule

\multirow{2}{*}{Real} 
    & UCR Archive (128 datasets)   & varies     & varies     & varies    & varies         & Benchmark suite of diverse time series tasks & No \\
    & ECG                          & 2          & 37,004     & 55,507     & 360            & Biomedical ECG signals with expert annotation & Yes \\
\midrule

\multirow{4}{*}{Synthesis} 
    & MCC-E           & 2     & 2,000     & 10,000    & 800           & Shapelet count for classification, equal amplitude     & Yes \\
    & MCC-H           & 2     & 2,000     & 10,000    & 800           & Shapelet count for classification, higher amplitude    & Yes \\
    & MTC-E           & 2     & 2,000     & 10,000    & 800           & Shapelet type for classification, equal amplitude      & Yes \\
    & MTC-H           & 2     & 2,000     & 10,000    & 800           & Shapelet type for classification, higher amplitude     & Yes \\

\bottomrule
\end{tabular}
}
\end{table*}

\section{Experimental Details}
\label{ap:ExperimentalDetails}
\subsection{Dataset Details} \label{ap:data}

Following the experimental setup of previous PHTSE methods \cite{TimeX2023Queen_TSX,TIMEX++_TSX}, we evaluate the explanation model using both synthetic and real-world datasets. The statistical details of the datasets are presented in Table \ref{tab:datasets}.

\textbf{Synthetic Datasets.} Previous PHTSE methods \cite{TimeX2023Queen_TSX,TIMEX++_TSX} have used synthetic dataset generation approaches such as SeqComb and FreqShapes, that derive from the data generation method in \cite{geirhos2020shortcut}. However \citet{geirhos2020shortcut} generates datasets by injecting highly salient features as shortcuts into the original data to test whether deep neural networks rely on them for classification. 
Accordingly, methods like FreqShapes inject time series segments with significantly higher amplitudes than the mean to serve as class-discriminative features.
Therefore, this approach is more similar to synthetic data generation methods in time series anomaly detection, which primarily evaluate whether an explanation model can highlight timesteps with extreme amplitude variations. In real-world time series classification tasks, class-discriminative features are often defined by specific temporal patterns rather than simple amplitude differences. For example, in ECG data \cite{ECG1998ecg}, the class-discriminative features P-waves and T-waves do not necessarily exhibit higher amplitudes than surrounding regions.

Based on this observation, we propose a new approach for synthetic data generation for the PHTSE problem. Inspired by the time series motif insertion methods in \cite{Persistence-BasedMotif}, we generate four different motif-based binary classification datasets: \textbf{Motif Count Classification Equal (MCC-E)}, \textbf{Motif Type Classification High (MTC-H)}, \textbf{Motif Count Classification High (MCC-H)}, and \textbf{Motif Type Classification Equal (MTC-E)}. 
Here, \textbf{``type''} and \textbf{``count''} refer to datasets where classification is determined by either the type of inserted motifs or their count, respectively. The terms \textbf{``Equal''} and \textbf{``High''} indicate whether the inserted motif has the same amplitude as the baseline waveform or significantly exceeds it.

These dataset synthetic methods better align with the mechanisms of the generation  real data, allowing us to better assess whether an explanation method can effectively identify time series patterns that are crucial for classification.

\textbf{Real-world Datasets.}
To further evaluate the explanation methods, we conduct experiments on several real-world time series datasets across diverse domains. 
\textbf{ECG}~\cite{ECG2001MIT-BIH} is a physiological dataset used for binary classification of atrial fibrillation (AF), where each instance corresponds to a raw ECG segment annotated by clinical experts. 
\textbf{UCR Archive}~\cite{ucr2019} is a large-scale benchmark suite containing time series from a wide range of application domains such as image outlines, motion sensors, and astronomical observations. Since most datasets in the archive do not provide ground-truth saliency labels, they are primarily used in our occlusion-based evaluation. Additionally, datasets with variable-length sequences are excluded for consistency. 
We select 114 fixed-length datasets from the archive that span a wide spectrum of domains and difficulty levels. 
\textit{To the best of our knowledge, this is the first work in Post Hoc Time Series Explanation (PHTSE—) that performs such a comprehensive and diverse evaluation on the UCR archive}.

%%%%%%%%%

\subsection{Benchmarks} \label{ap:Benchmarks}

We evaluate explanation methods using multiple benchmark approaches designed for interpreting deep learning models on time series data. 

\textbf{Perturbation-based methods:} \textbf{TimeX++} \cite{TIMEX++_TSX} applies an information bottleneck principle to generate label-preserving, in-distribution explanations by perturbing the input while addressing trivial solutions and distributional shift issues.
\textbf{TimeX} \cite{TimeX2023Queen_TSX} trains an interpretable surrogate model to mimic a pretrained classifier while preserving latent space relations, ensuring faithful and structured explanations.
\textbf{Dynamask} \cite{dynamicmasks2021_TSX} generates dynamic perturbation masks to produce instance-wise importance scores while maintaining temporal dependencies. \textbf{WinIT} \cite{winit} captures feature importance over time by explicitly modeling temporal dependencies and summarizing importance over past time windows. 

\textbf{Gradient-based methods:} \textbf{Integrated Gradients (IG)} \cite{IG2017axiomatic} computes feature attributions by integrating gradients along a path from a baseline input to the actual input, ensuring sensitivity and implementation invariance. \textbf{SGT + GRAD} \cite{SGTGRAD} enhances saliency-based explanations by reducing noisy gradients through iterative feature masking while preserving model performance. \textbf{MILLET} \cite{MILLET} enables inherent interpretability for time series classifiers by leveraging Multiple Instance Learning (MIL) to produce high-quality, sparse explanations without sacrificing performance.

\textbf{Metrics:} We follow the evaluation metrics from \cite{TimeX2023Queen_TSX}, adopting AUP (Area Under Perturbation Curve), AUR (Area Under Recall), and AUPRC (Area Under Precision-Recall Curve) in the saliency evaluation experiment to assess the discrepancy between the generated saliency scores and the ground truth. A metric value closer to 1 indicates higher accuracy.
In the occlusion experiment, we use the prediction AUROC (Area Under the Receiver Operating Characteristic Curve) of the black-box classifier as the evaluation metric, measuring the impact of removing important subsequences on classification performance.

\subsection{Experimental Settings} \label{ap:settings}

All experiments were conducted on a machine equipped with an NVIDIA RTX 4090 GPU and 24 GB of RAM. The black-box classifiers used in our evaluation (e.g., Transformer, CNN) are trained independently using standard cross-entropy loss until convergence, with early stopping based on validation accuracy. Unless otherwise specified, default training settings from each baseline’s original implementation are followed.

To ensure robustness and account for variability in training and explanation outputs, we repeat each experiment across \textbf{five random seeds}, reporting the mean and standard deviation as error bars. The random seeds affect both model initialization and data shuffling. For methods involving sampling-based perturbation (e.g., \timexp), the same set of seeds is applied to ensure fair comparison.

All models and explanation methods are implemented in PyTorch, and our codebase supports efficient parallel evaluation across datasets and seeds.

For dataset splits, we follow the standard training/test partitions provided in each dataset. A validation set comprising \textbf{20\% of the training set} is held out to tune hyperparameters and perform early stopping.

\section{Additional  Experiments}\label{ap:ad_ex}
\subsection{Saliency Experiments}
To further examine the impact of black-box models on explainability, we conducted saliency evaluation experiments on both synthetic and real-world datasets using LSTM \cite{gers2002learning}, CNN \cite{cnn2016multi} and MultiRocket \cite{tan2022multirocket} as classifiers. The results are presented in Tables \ref{tab:attr_synthetic_cnn}, \ref{table:cnn_ecg}, \ref{tab:attr_synthetic_lstm}, \ref{table:lstm_ecg}, and \ref{tab:saliency_synthetic_rockect}. 

We observe that other baseline methods are significantly affected by changes in the classifier, whereas \shapex\ consistently achieves the best performance with remarkable stability. This indicates that \shapex\ provides highly robust explanations, benefiting from its ability to preserve true causal relationships.

\begin{table*}[t]
\scriptsize 
    \centering
    \caption{Saliency score evaluation on synthetic datasets (CNN as the black-box model).}
    \resizebox{1\columnwidth}{!}{
    
    \begin{sc}

    \begin{tabular}{l|c c c| c c c }
     \toprule

 & \multicolumn{3}{c|}{MCC-E} & \multicolumn{3}{c}{MCC-H} \\
 Method & AUPRC & AUP & AUR                     & AUPRC & AUP & AUR \\
 \midrule
 IG    & {0.3394}\std{0.0045} & {0.3966}\std{0.0060} & {0.4689}\std{0.0024}        & {0.4097}\std{0.0059} & {0.4896}\std{0.0065} & {0.4960}\std{0.0027} \\
 Dynamask& {0.3488}\std{0.0056} & {0.2277}\std{0.0047} & {0.1008}\std{0.0024}        & {0.4017}\std{0.0073} & {0.2633}\std{0.0057} & {0.0510}\std{0.0012} \\
 WinIT & {0.1513}\std{0.0028} & {0.0625}\std{0.0010} & {0.4636}\std{0.0074}        & {0.1659}\std{0.0033} & {0.0628}\std{0.0011} & {0.4481}\std{0.0063} \\

 TimeX & {0.2608}\std{0.0029} & {0.2571}\std{0.0032} & {0.5505}\std{0.0013}        & {0.3107}\std{0.0033} & {0.3147}\std{0.0038} & {0.4493}\std{0.0013} \\
  \modelname & {0.3179}\std{0.0036} & {0.3155}\std{0.0041} & \textbf{0.5657}\std{0.0025}        & {0.1960}\std{0.0023} & {0.2109}\std{0.0034} & {0.4900}\std{0.0021} \\
 ShapeX & \textbf{0.6197}\std{0.0037} & \textbf{0.5157}\std{0.0078} & {0.2775}\std{0.0048}        & \textbf{0.8100}\std{0.0014} & \textbf{0.6144}\std{0.0045} & \textbf{0.7541}\std{0.0055} \\
 \midrule
 
 & \multicolumn{3}{c|}{MTC-E} & \multicolumn{3}{c}{MTC-H} \\
 Method & AUPRC & AUP & AUR & AUPRC & AUP & AUR \\ 
 \midrule
    IG & {0.3427}\std{0.0017} & \textbf{0.7233}\std{0.0039} & {0.2616}\std{0.0024}            & {0.3531}\std{0.0022} & {0.6482}\std{0.0060} & {0.3664}\std{0.0065} \\
 Dynamask & {0.2869}\std{0.0016} & {0.5252}\std{0.0026} & {0.0857}\std{0.0014}              & {0.3300}\std{0.0032} & {0.6177}\std{0.0068} & {0.0798}\std{0.0011} \\
 WinIT & {0.1511}\std{0.0034} & {0.0632}\std{0.0018} & {0.4522}\std{0.0068}         & {0.1321}\std{0.0021} & {0.0614}\std{0.0008} & {0.4790}\std{0.0066} \\
 
 TimeX & {0.1413}\std{0.0015} & {0.1197}\std{0.0012} & {0.5787}\std{0.0022}        & {0.2315}\std{0.0032} & {0.2293}\std{0.0040} & {0.5184}\std{0.0027} \\
 \modelname & {0.1157}\std{0.0009} & {0.0877}\std{0.0008} & {0.4845}\std{0.0035}        & {0.1174}\std{0.0010} & {0.1136}\std{0.0014} & {0.5436}\std{0.0026} \\
 ShapeX & \textbf{0.6954}\std{0.0029} & {0.5084}\std{0.0051} & \textbf{0.6663}\std{0.0063}        & \textbf{0.6793}\std{0.0014} & \textbf{0.4249}\std{0.0025} & \textbf{0.8933}\std{0.0045} \\

     \bottomrule
      \end{tabular}
    \end{sc}
    }
    \label{tab:attr_synthetic_cnn}
\end{table*}

\begin{table}[H]
\scriptsize
\centering
\caption{Saliency score evaluation on ECG dataset (CNN as the black-box model).}
\vspace{1mm}
\resizebox{0.5\columnwidth}{!}{
    \begin{sc}
\begin{tabular}{l|c c c}
\toprule
 & \multicolumn{3}{c}{ECG}\\
 Method & AUPRC & AUP & AUR \\
\midrule
 IG & 0.4949\std{0.0010} & 0.5374\std{0.0012} & {0.5306}\std{0.0010} \\
 Dynamask & 0.4598\std{0.0010} & 0.7216\std{0.0027} & 0.1314\std{0.0008} \\
 WinIT & 0.3963\std{0.0011} & 0.3292\std{0.0020} & 0.3518\std{0.0012} \\
 TimeX & {0.6401}\std{0.0010} & {0.7458}\std{0.0011} & 0.4161\std{0.0008} \\
 
\modelname & {0.6726}\std{0.0010} & {0.7570}\std{0.0011} & {0.4319}\std{0.0012} \\
\midrule
\shapex    & \textbf{0.7198} \std{0.0029} & \textbf{0.8321} \std {0.0031}  & \textbf{0.6948} \std {0.0032}\\
\bottomrule
\end{tabular}
    \end{sc}
}
\label{table:cnn_ecg}
\end{table}

\begin{table*}[ht]
\scriptsize 
    \centering
    \caption{Saliency score evaluation on synthetic datasets (MultiRocket as the black-box model).}
    \resizebox{1\columnwidth}{!}{
    
    \begin{sc}

    \begin{tabular}{l|c c c| c c c }
     \toprule
     
     & \multicolumn{3}{c|}{MCC-E} & \multicolumn{3}{c}{MCC-H} \\
     Method & AUPRC & AUP & AUR                     & AUPRC & AUP & AUR \\
     \midrule
     IG    & { 0.2871}\std{0.0065} & { 0.4713}\std{0.0013} & { 0.3415}\std{0.0086}        & { 0.3391}\std{0.0048} & { 0.4215}\std{0.0038} & {  0.5214}\std{0.0069} \\
   Dynamask& \underline{ 0.4141}\std{0.0048} & { 0.4726}\std{0.0058} & { 0.4142}\std{0.0014}        & { 0.4150}\std{0.0079} & { 0.1572}\std{0.0038} & { 0.2113}\std{0.0033} \\
     WinIT & { 0.1974}\std{0.0057} & { 0.2411}\std{0.0112} & { 0.3250}\std{0.0057}        & { 0.1528}\std{0.0024} & { 0.0925}\std{0.0075} & { 0.5101}\std{0.0078} \\
     CoRTX & { 0.3109}\std{0.0071} & { 0.4782}\std{0.0171} & \underline{ 0.4317}\std{0.0148}        & { 0.5931}\std{0.0016} & { 0.5107}\std{0.0011} & { 0.4839}\std{0.0062} \\
      MILLET & { 0.2355}\std{0.0026} & { 0.2513}\std{0.0071} & { 0.2142}\std{0.0056}         & { 0.3147}\std{0.0013} & { 0.2511}\std{0.0067} & { 0.3125}\std{0.0061} \\

     TimeX & { 0.4032}\std{0.0056} & { 0.3140}\std{0.0082} & \textbf{ 0.6490}\std{0.0034} 
     & { 0.4513}\std{0.0077} & \underline{ 0.6712}\std{0.0051} & { 0.3500}\std{0.0092} \\

     \modelname & { 0.3509}\std{0.0066} & \underline{ 0.4851}\std{0.0061} & { 0.2631}\std{0.0009} 
     & \underline{ 0.6351}\std{0.0051} & { 0.6524}\std{0.0078} & { 0.5120}\std{0.0023} \\

      \midrule
      ShapeX\_SF & { 0.2715}\std{0.0072} & { 0.2231}\std{0.0033} & {0.2690}\std{0.0056}         & { 0.4681}\std{0.0082} & { 0.2046}\std{0.0034} & \underline{ 0.7135}\std{0.0081} \\
    \shapex & \textbf{ 0.6125}\std{0.0067} & \textbf{ 0.5209}\std{0.0023} & { 0.3511}\std{0.0083} & \textbf{ 0.8245}\std{0.0044} & \textbf{ 0.6944}\std{0.0070} & \textbf{ 0.7714}\std{0.0023} \\
     \midrule
     \midrule

     & \multicolumn{3}{c|}{MTC-E} & \multicolumn{3}{c}{MTC-H} \\
     Method & AUPRC & AUP & AUR & AUPRC & AUP & AUR \\ 
     \midrule
        IG & { 0.2481}\std{0.0023} & { 0.1125}\std{0.0070} & \textbf{ 0.5721}\std{0.0034}            & { 0.3152}\std{0.0054} & { 0.3513} \std{0.0034} & { 0.4742}\std{0.0075} \\
  Dynamask & { 0.1409}\std{0.0035} & { 0.1266}\std{0.0056} & { 0.2519}\std{0.0065}              & { 0.2760}\std{0.0045} & { 0.3937}\std{0.0076} & { 0.2175}\std{0.0044} \\
     WinIT & { 0.1539}\std{0.0076} & { 0.1072}\std{0.0023} & { 0.0417}\std{0.0050}         & { 0.1434}\std{0.0076} & { 0.0821}\std{0.0023} & { 0.4728}\std{0.0078} \\
     CoRTX & { 0.1841}\std{0.0056} & { 0.1851}\std{0.0049} & { 0.5152}\std{0.0051}         & { 0.2153}\std{0.0022} & { 0.2307}\std{0.0051} & \underline{ 0.5311}\std{0.0016} \\
     MILLET & { 0.1846}\std{0.0027} & { 0.1252}\std{0.0009} & { 0.1952}\std{0.0024}         & { 0.2194}\std{0.0038} & { 0.2474}\std{0.0089} & { 0.3811}\std{0.0032} \\
      TimeX & { 0.2511}\std{0.0061} & \underline{ 0.6249}\std{0.0010} & { 0.1250}\std{0.0043}  & { 0.3856}\std{0.0008} & { 0.3745}\std{0.0039} & { 0.1730}\std{0.0023} \\

     \modelname &{ 0.2165}\std{0.0042} &{ 0.4301}\std{0.0097} & { 0.2250}\std{0.0042}  & { 0.3521}\std{0.0061} & { 0.3504}\std{0.0057} & { 0.1149}\std{0.0055}
     \\ \midrule
     ShapeX\_SF & \underline{ 0.3840}\std{0.0042} & { 0.1677}\std{0.0021} & { 0.4698}\std{0.0003}         & \underline{ 0.3890}\std{0.0041} & \textbf{ 0.4600}\std{0.0041} & { 0.1459}\std{0.0082} \\
     \shapex & \textbf{ 0.6705}\std{0.0001} & \textbf{ 0.6558}\std{0.0041} & \underline{ 0.5660}\std{0.0008} & \textbf{ 0.6859}\std{0.0031} & \underline{ 0.4262}\std{0.0051} & \textbf{ 0.9122}\std{0.0032}\\

     \bottomrule
      \end{tabular}
    \end{sc}
    }
    \label{tab:saliency_synthetic_rockect}
\end{table*}

\begin{table*}[t]
\scriptsize 
    \centering
    \caption{Saliency score evaluation on synthetic datasets (LSTM as the black-box model).}
    \resizebox{1\columnwidth}{!}{
    
    \begin{sc}

    \begin{tabular}{l|c c c| c c c }
     \toprule
 & \multicolumn{3}{c|}{MCC-E} & \multicolumn{3}{c}{MCC-H} \\
 Method & AUPRC & AUP & AUR                     & AUPRC & AUP & AUR \\
 \midrule
 IG    & {0.4752}\std{0.0078} & {0.5282}\std{0.0088} & {0.4622}\std{0.0059}        & {0.5144}\std{0.0079} & {0.5297}\std{0.0092} & {0.4965}\std{0.0054} \\
 Dynamask& {0.3999}\std{0.0067} & {0.4925}\std{0.0099} & {0.0369}\std{0.0014}        & {0.4178}\std{0.0068} & {0.3239}\std{0.0073} & {0.0103}\std{0.0003} \\
 WinIT & {0.2037}\std{0.0076} & {0.1897}\std{0.0160} & {0.3355}\std{0.0085}        & {0.2013}\std{0.0057} & {0.1573}\std{0.0157} & {0.3472}\std{0.0115} \\
 
 TimeX & {0.2407}\std{0.0026} & {0.2139}\std{0.0025} & {0.5495}\std{0.0009}        & {0.5013}\std{0.0052} & {0.5492}\std{0.0065} & {0.3482}\std{0.0019} \\
 \modelname & {0.3766}\std{0.0041} & {0.3356}\std{0.0040} & {0.5242}\std{0.0017}        & {0.4799}\std{0.0043} & {0.2395}\std{0.0039} & {0.5224}\std{0.0046} \\
 ShapeX & \textbf{0.7422}\std{0.0029} & \textbf{0.7507}\std{0.0053} & \textbf{0.5703}\std{0.0059}        & \textbf{0.8166}\std{0.0014} & \textbf{0.6893}\std{0.0054} & \textbf{0.7701}\std{0.0052} \\
 \midrule
 
 & \multicolumn{3}{c|}{MTC-E} & \multicolumn{3}{c}{MTC-H} \\
 Method & AUPRC & AUP & AUR & AUPRC & AUP & AUR \\ 
 \midrule
    IG & {0.2469}\std{0.0031} & {0.3512}\std{0.0063} & {0.4862}\std{0.0044}            & {0.4453}\std{0.0029} & \textbf{0.5938}\std{0.0030} & {0.3652}\std{0.0023} \\
 Dynamask & {0.1243}\std{0.0012} & {0.0803}\std{0.0038} & {0.0085}\std{0.0005}              & {0.2344}\std{0.0016} & {0.2398}\std{0.0058} & {0.0283}\std{0.0006} \\
 WinIT & {0.1395}\std{0.0024} & {0.0953}\std{0.0098} & {0.3785}\std{0.0091}         & {0.1560}\std{0.0046} & {0.1457}\std{0.0165} & {0.2519}\std{0.0084} \\
 
 TimeX & {0.2012}\std{0.0022} & {0.1500}\std{0.0012} & \textbf{0.6780}\std{0.0022}        & {0.1874}\std{0.0021} & {0.1556}\std{0.0016} & {0.5216}\std{0.0013} \\
 \modelname & {0.1314}\std{0.0006} & {0.1235}\std{0.0011} & {0.5070}\std{0.0012}        & {0.3169}\std{0.0027} & {0.2076}\std{0.0020} & {0.4632}\std{0.0018} \\
 ShapeX & \textbf{0.6028}\std{0.0053} & \textbf{0.4472}\std{0.0080} & {0.5616}\std{0.0086}        & \textbf{0.6702}\std{0.0015} & {0.3952}\std{0.0024} & \textbf{0.8549}\std{0.0056} \\
      \midrule 
     \bottomrule
      \end{tabular}
    \end{sc}
    }
    \label{tab:attr_synthetic_lstm}
\end{table*}

\begin{table}[H]
\scriptsize
\centering
\caption{Saliency score evaluation on ECG dataset (LSTM as the black-box model).}
\vspace{1mm}
\resizebox{0.5\columnwidth}{!}{
    \begin{sc}
\begin{tabular}{l|c c c}
\toprule
 & \multicolumn{3}{c}{ECG}\\
 Method & AUPRC & AUP & AUR \\
\midrule
 IG & 0.5037\std{0.0018} & 0.6129\std{0.0026} & 0.4026\std{0.0015} \\
 Dynamask & 0.3730\std{0.0012} & 0.6299\std{0.0030} & 0.1102\std{0.0007} \\
 WinIT & 0.3628\std{0.0013} & 0.3805\std{0.0022} & 0.4055\std{0.0009} \\
 TimeX & {0.6057}\std{0.0018} & {0.6416}\std{0.0024} & {0.4436}\std{0.0017} \\

\modelname & {0.6512}\std{0.0011} & {0.7432}\std{0.0011} & {0.4451}\std{0.0008} \\ \midrule
\shapex    & \textbf{0.7206} \std{0.0028} & \textbf{0.8510} \std {0.0032}  & \textbf{0.6924} \std {0.0032}\\
\bottomrule
\end{tabular}
    \end{sc}
}
\label{table:lstm_ecg}
\end{table}

\begin{table*}[h]
\centering
\caption{Computation Time (in seconds) of Different Explanation Methods}
\label{tab:experiment_times}
\begin{sc}

 \resizebox{0.4\columnwidth}{!}{
\begin{tabular}{lcccc}
\toprule
\textbf{Method} & \textbf{MCC-E} & \textbf{MCC-H} & \textbf{MTC-E} & \textbf{MTC-H} \\
\midrule
TimeX++    & 13.0470 & 13.7306 & 13.5581 & 5.6111 \\
IG      & 49.7619 & 52.9600 & 53.4244 & 910.0483 \\
Dynamask    & 837.2252 & 851.3321 & 840.9815 & 833.0267 \\
Winit   & 167.9867 & 165.1973 & 166.8685 & 166.5299 \\
ShapeX  & 129.0921 & 132.8452 & 127.1462 & 115.3432 \\
\bottomrule
\end{tabular}}
\end{sc}
\end{table*}

\begin{table*}[t]
    \centering
    \scriptsize
    \caption{Ablation analysis on synthetic datasets.}
    \setlength{\tabcolsep}{3pt}
    \renewcommand{\arraystretch}{1}
    \resizebox{1\textwidth}{!}{
    \begin{tabular}{l|ccc|ccc}
        \toprule
    
        \multirow{2}{*}{Ablations} 
        & \multicolumn{3}{c|}{MCC-E} 
        & \multicolumn{3}{c}{MCC-H} \\
        & AUPRC & AUP & AUR 
        & AUPRC & AUP & AUR \\
        \midrule
        Full &  \textbf{0.6407}\std{0.0036} & \textbf{0.5614}\std{0.0076} & {0.3679}\std{0.0050} &          \textbf{0.8113}\std{0.0013} & \textbf{0.6838}\std{0.0054} & \textbf{0.7431}\std{0.0055}\\
        w/o matching loss & {0.1455}\std{0.0112} & {0.1682}\std{0.0057} & {0.1347}\std{0.0012} &                    {0.2612}\std{0.0035} & {0.1792}\std{0.0011} & {0.3849}\std{0.0023} \\
        w/o diversity loss & {0.1309}\std{0.0023} & {0.1450}\std{0.0061} & {0.1103}\std{0.0031}&                    {0.2630}\std{0.0032} & {0.1849}\std{0.0012} & {0.3542}\std{0.0035} \\
        w/o shapelet encoder & {0.2407}\std{0.0013} & {0.0933}\std{0.0032} & \textbf{0.4359}\std{0.0023} &                 {0.2366}\std{0.0030} & {0.3274}\std{0.0074} & {0.7113}\std{0.0023} \\
        w/o LINEAR & {0.5926}\std{0.0036} & {0.5381}\std{0.0011} & {0.3352}\std{0.0021}                        & {0.7633}\std{0.0024} & {0.6551}\std{0.0062} & {0.7084}\std{0.0037} \\
        w/o segment & {0.6104}\std{0.0036} & {0.5274}\std{0.0090} & {0.2461}\std{0.0046}                         & {0.7030}\std{0.0023} &     {0.6312}\std{0.0099} & {0.3851}\std{0.0002} \\
        \midrule
        
        \multirow{2}{*}{Ablations} 
        & \multicolumn{3}{c|}{MTC-E} 
        & \multicolumn{3}{c}{MTC-H} \\
        & AUPRC & AUP & AUR 
        & AUPRC & AUP & AUR \\
        \midrule
        Full  &\textbf{0.6100}\std{0.0048} & {0.3962}\std{0.0067} & \textbf{0.5472}\std{0.0082}              & \textbf{0.6792}\std{0.0014} & \textbf{0.4255}\std{0.0024} & \textbf{0.9019}\std{0.0041} \\
        w/o matching loss & {0.1273}\std{0.0052} & {0.1135}\std{0.0057} & {0.1043}\std{0.0056} &                    {0.2153}\std{0.0034} & {0.1305}\std{0.0044} & {0.3347}\std{0.0012} \\
        w/o diversity loss & {0.1371}\std{0.0043} & {0.1524}\std{0.0012} & {0.1042}\std{0.0023}&                    {0.2094}\std{0.0054} & {0.1053}\std{0.0022} & {0.4914}\std{0.0035} \\
        w/o shapelet encoder & {0.2358}\std{0.0057} & {0.1827}\std{0.0076} & {0.0982}\std{0.0023} &                 {0.1902}\std{0.0030} & {0.2781}\std{0.0012} & {0.8120}\std{0.0033} \\
        w/o LINEAR & {0.5627}\std{0.0045} & \textbf{0.3991}\std{0.0033} & {0.5184}\std{0.0055}                       & {0.6204}\std{0.0065} &  {0.3683}\std{0.0042} & {0.7711}\std{0.0055} \\
        w/o segment & {0.5792}\std{0.0070} & {0.3518}\std{0.0045} & {0.5371}\std{0.0020}                        & {0.6347}\std{0.0045} &     {0.4014}\std{0.0060} & {0.8820}\std{0.0034} \\
        \bottomrule
    \end{tabular}
    }
    \label{tab:ablation_synthetic}
\end{table*}

\subsection{Details of the Occlusion Experiment}\label{ap:oc}

To enable a fair and comprehensive evaluation across diverse datasets, we first preprocessed the UCR Archive by excluding datasets with variable-length sequences, as their inconsistent input shapes are incompatible with fixed-length perturbation settings. Additionally, both \timex~and \timexp~suffered from gradient explosion during training on a small subset of datasets. For fairness, we removed these failed runs from comparison.

Nevertheless, the remaining collection still covers a highly diverse and representative set of over 100 datasets, making it by far the most comprehensive occlusion-based evaluation conducted in the literature of PHTSE. The aggregated results are illustrated in Figure~\ref{fig:vis_occlusion_page1},~\ref{fig:vis_occlusion_page2},~\ref{fig:vis_occlusion_page3}, and~\ref{fig:vis_occlusion_page4}.

Given the inherent diversity of UCR datasets---covering various domains, sequence lengths, and class cardinalities, generating accurate saliency scores remains a challenging task. Consequently, no single explanation model dominates across all datasets. However, \shapex\ demonstrates superior robustness compared to baseline methods: its performance degrades more gracefully on challenging cases, as reflected in the smoother drop of AUROC scores across datasets. The statistical comparison, with Figure~\ref{fig:vis_occlusion_combined}(a)  reporting mean performance and Figure~\ref{fig:vis_best_grid} highlighting the best-case results, reinforces the advantage of \shapex\ as the most consistently robust and reliable method for PHTSE tasks to date.

%%%%%%%% occlusion :
\begin{figure*}[h]	
			\centering  	
			\centering    
			\includegraphics[width=1\linewidth]{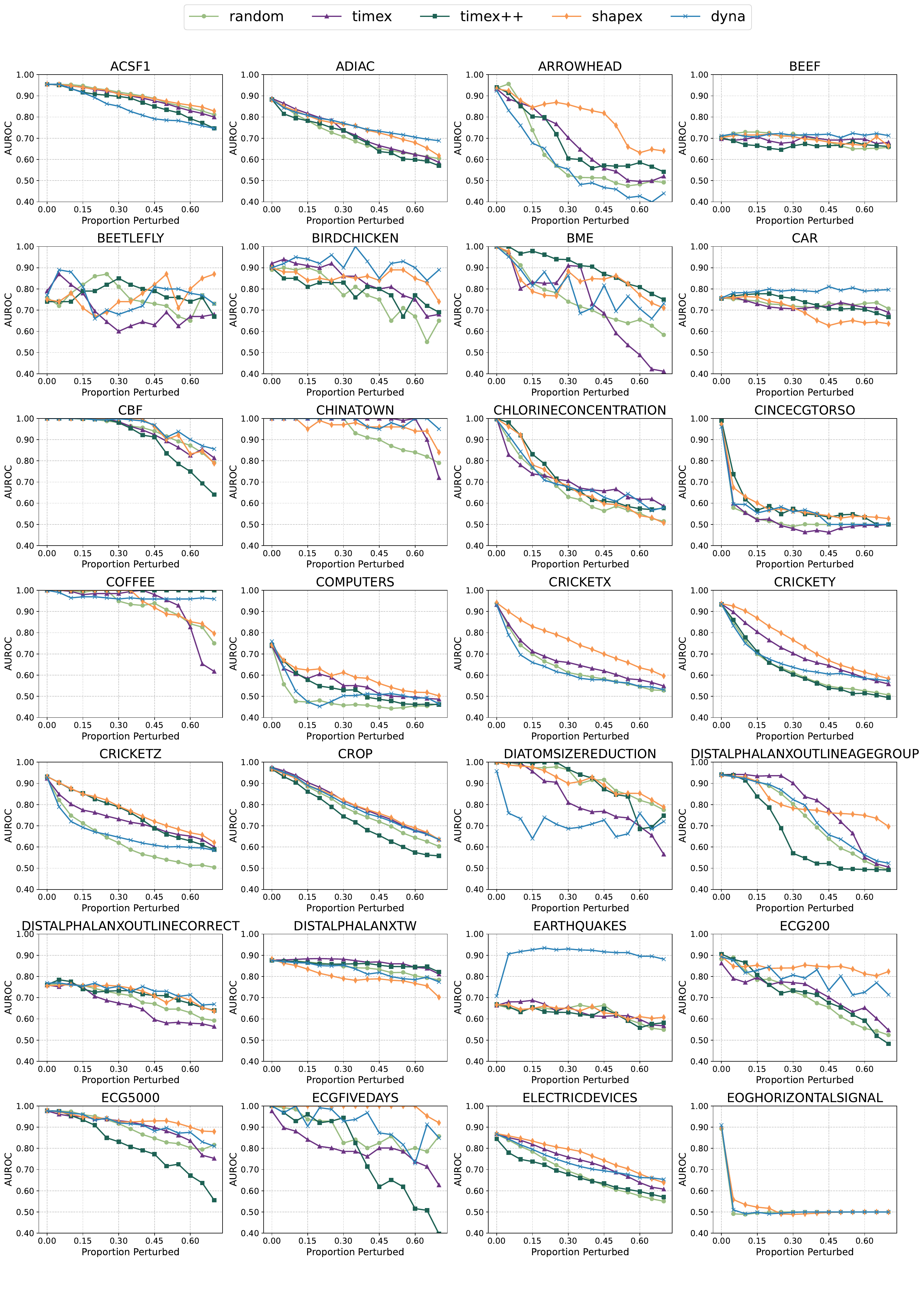}
            \caption{Occlusion experimental results on UCR archive.}\label{fig:vis_occlusion_page1} 
\end{figure*}
\begin{figure*}[h]	
			\centering  	
			\centering    
			\includegraphics[width=1\linewidth]{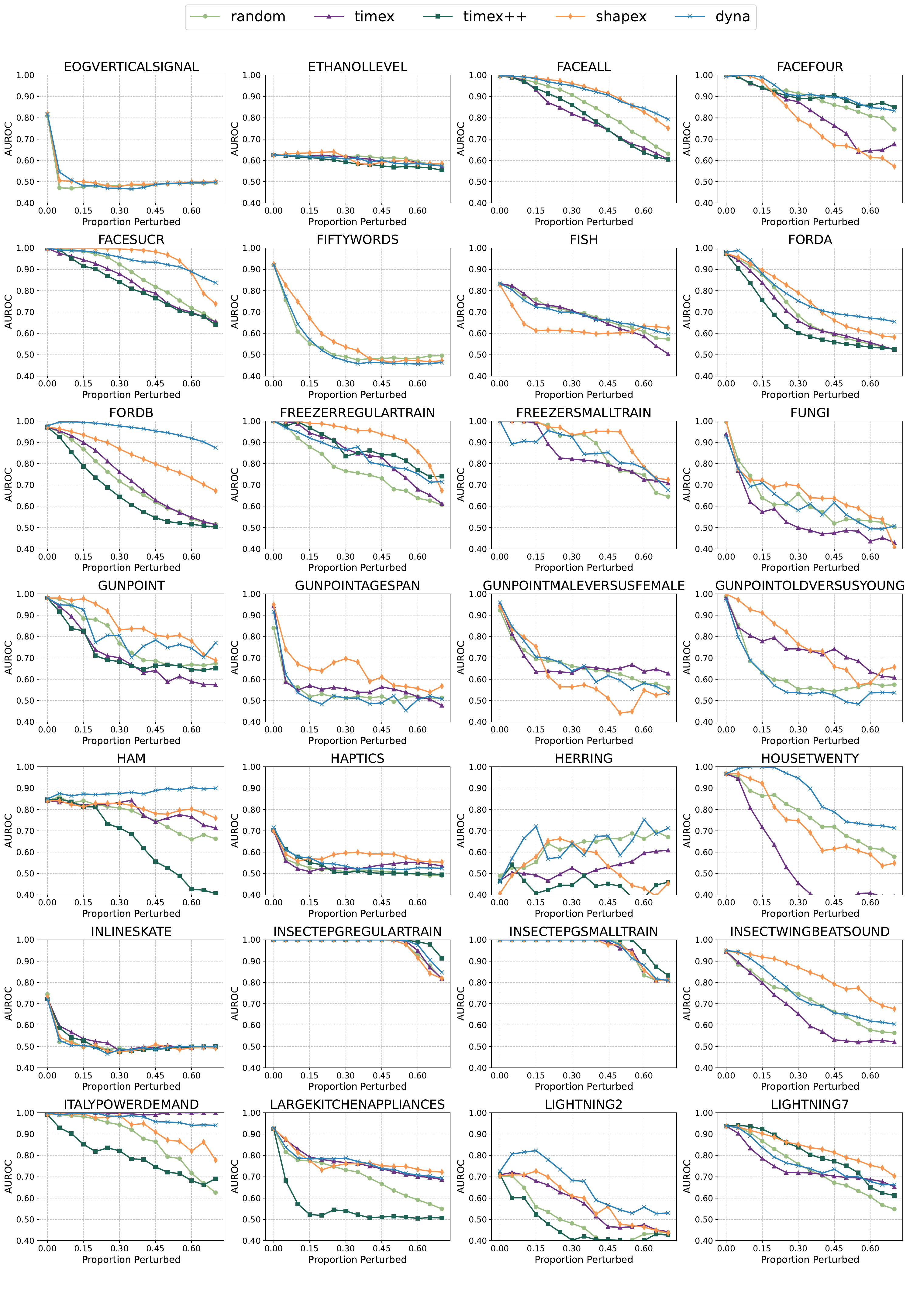}
            \caption{Occlusion experimental results on UCR archive.}\label{fig:vis_occlusion_page2} 
\end{figure*}
\begin{figure*}[h]	
			\centering  	
			\centering    
			\includegraphics[width=1\linewidth]{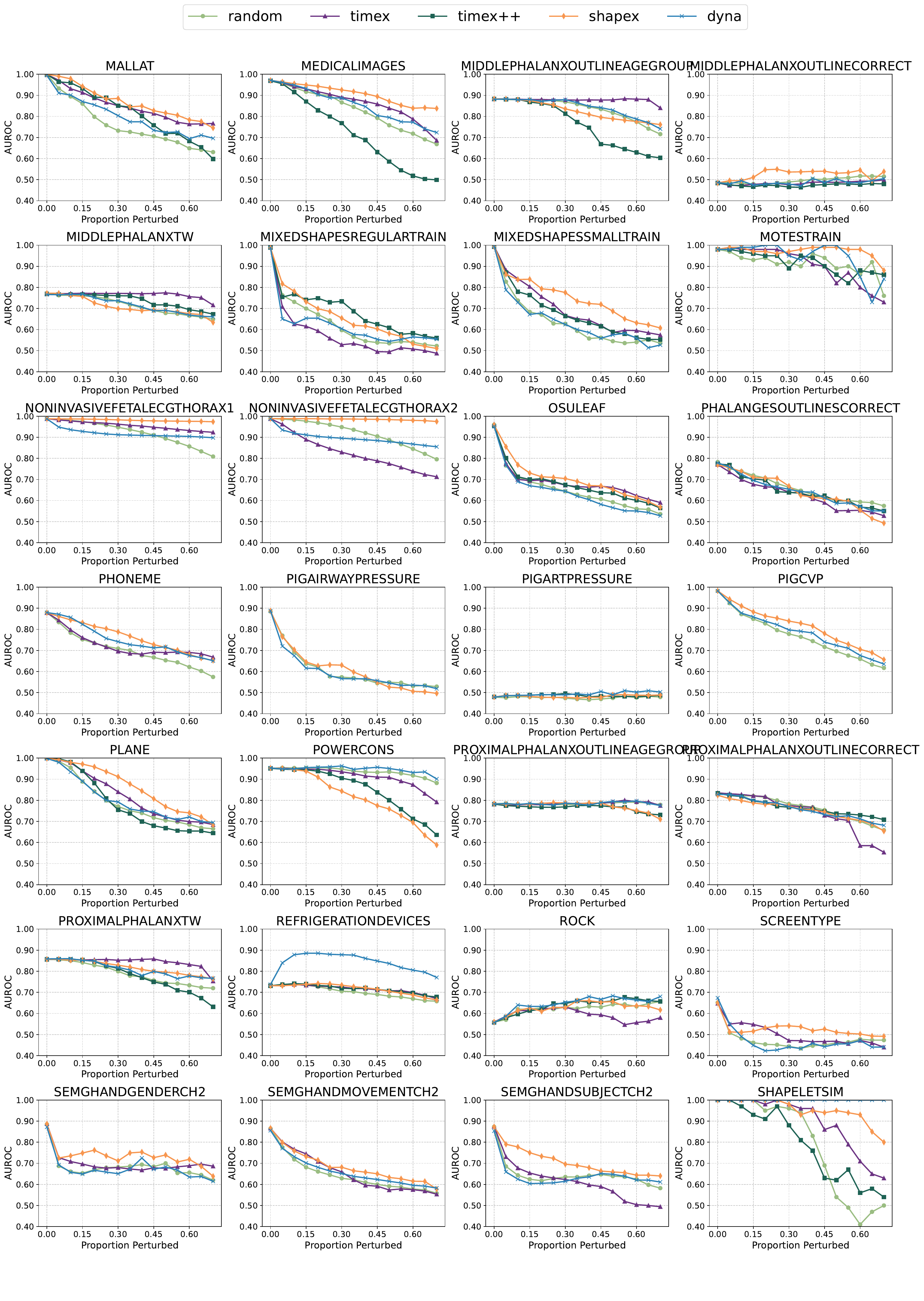}
            \caption{Occlusion experimental results on UCR archive.}\label{fig:vis_occlusion_page3} 
\end{figure*}
\begin{figure*}[h]	
			\centering  	
			\centering    
			\includegraphics[width=1\linewidth]{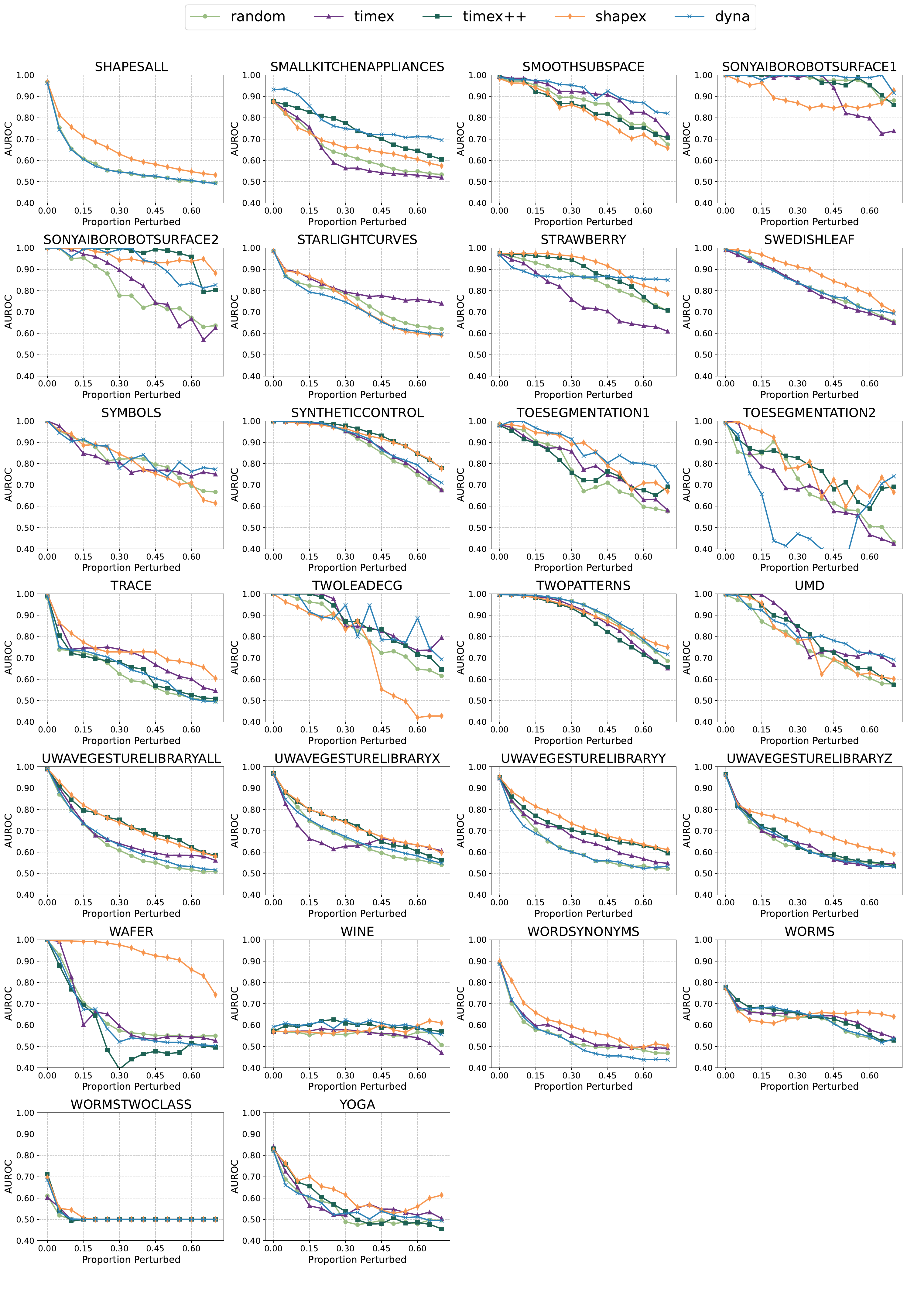}
            \caption{Occlusion experimental results on UCR archive.}\label{fig:vis_occlusion_page4} 
\end{figure*}

\section{Further   Analysis  Experiments}
\subsection{Computational Cost Analysis}
It is well known that computing Shapley value is highly time-consuming, making computational cost a critical consideration in Shapley value analysis. We measure the inference time required for explanation generation across several datasets, with results presented in Table \ref{tab:experiment_times}.  

While \shapex\ is not the fastest method, it is significantly more efficient than Dynamask. This acceptable computational cost is primarily attributed to \shapex's segment-level design and its temporally relational subset extraction strategy. Consequently, \shapex\ not only achieves superior performance but also maintains practical feasibility.

\begin{figure*}[t]
    \centering
    \includegraphics[width=1\linewidth]{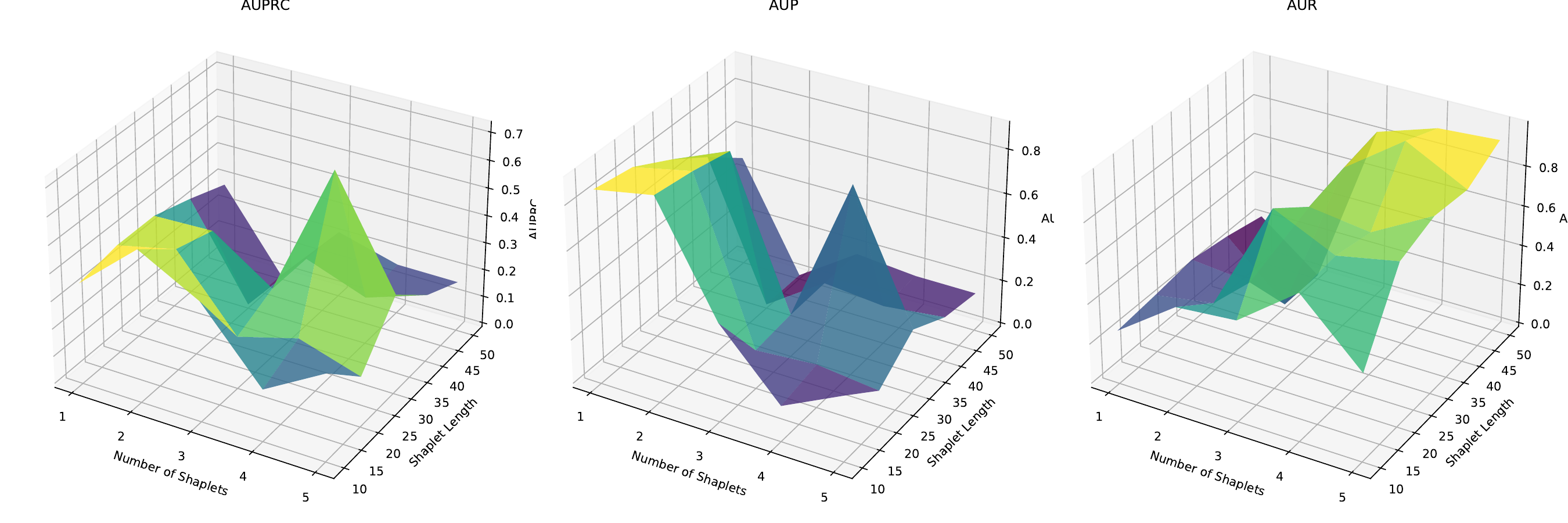}
    \caption{Saliency score generated by \shapex\ concerning various shapelet lengths and quantities on ECG Dataset.}
    \label{fig:len_num}
\end{figure*}
\subsection{Parameter Analysis}  
We further analyze the impact of shapelet-related parameters on the performance of \shapex. Specifically, we examine the interplay between shapelet length ($L$) and the number of shapelets ($N$). Figure \ref{fig:len_num} presents the results on the ECG dataset.  

Overall, we observe that AUP decreases as the number of shapelets increases, indicating that only shapelets corresponding to key features contribute to improved accuracy. In contrast, AUR exhibits an opposite trend, suggesting that a larger number of shapelets facilitates the discovery of more latent features. Additionally, optimal performance is achieved when the shapelet length is appropriately chosen, highlighting the importance of selecting a balanced length.

\subsection{Case Study}
\label{ap:case}

\begin{figure*}[h]	
			\centering  	
			\centering    
			\includegraphics[scale=0.4]{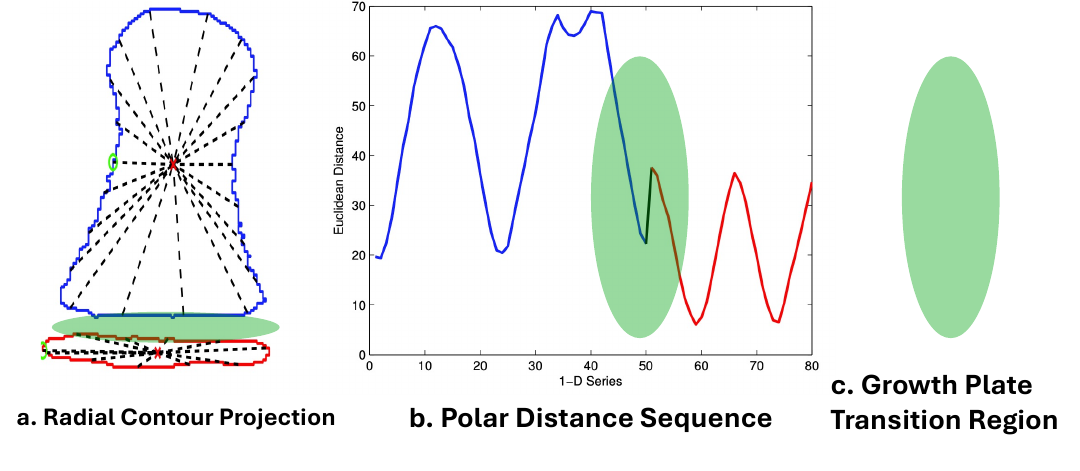}
            \caption{Radial sampling from the center (a) yields a polar distance sequence (b), where the green region highlights the growth plate transition region. \cite{boneage2014PhalangesOutlinesCorrect}
}\label{fig:vis_Phalanges_cauae} 
\end{figure*}
\begin{figure*}[h]	
			\centering  	
			\centering    
			\includegraphics[scale=0.4]{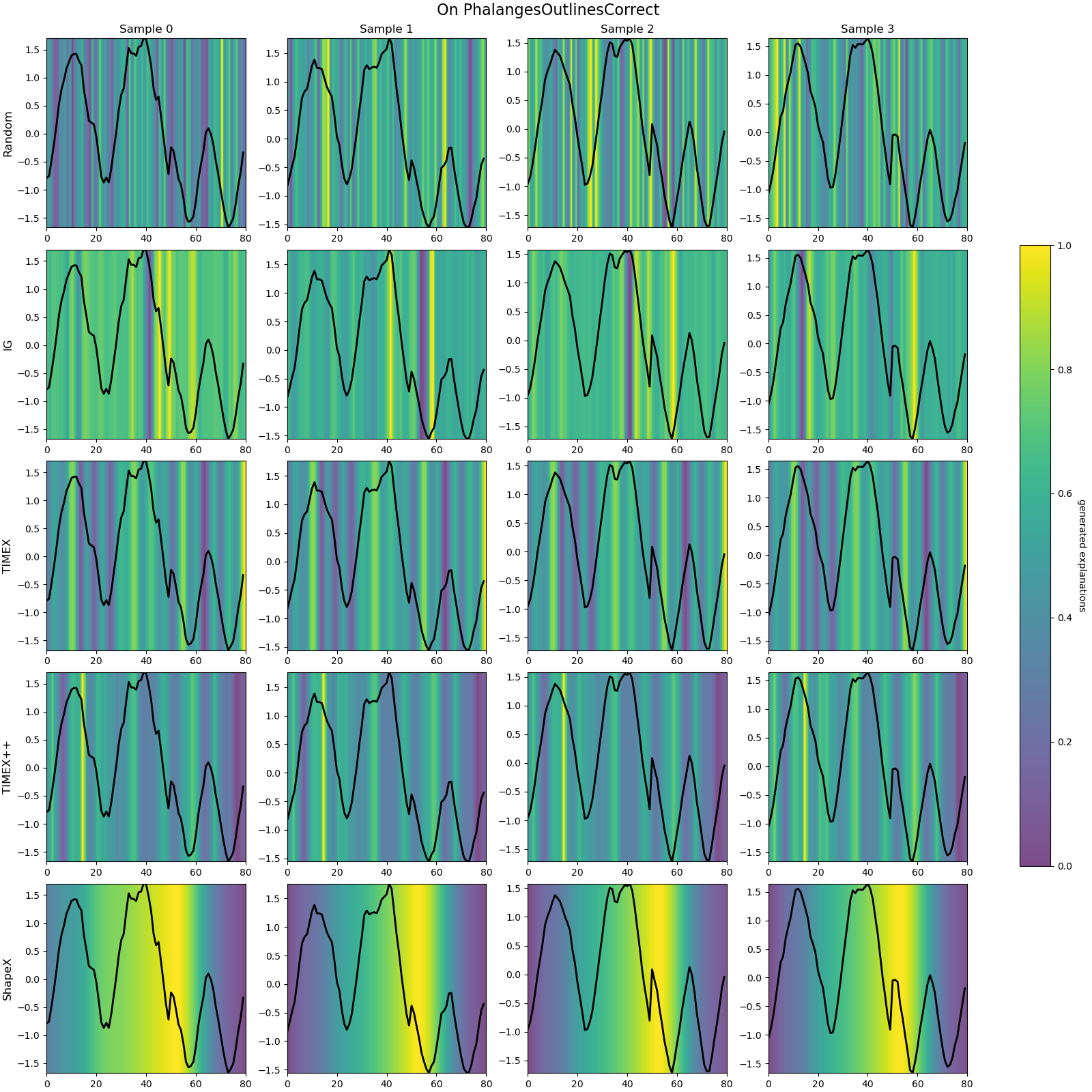}

            \caption{Visualization of saliency score on PhalangesOutlinesCorrect dataset.}\label{fig:vis_Phalanges} 
\end{figure*}

\begin{figure*}[h]	
			\centering    
			\includegraphics[scale=0.4]{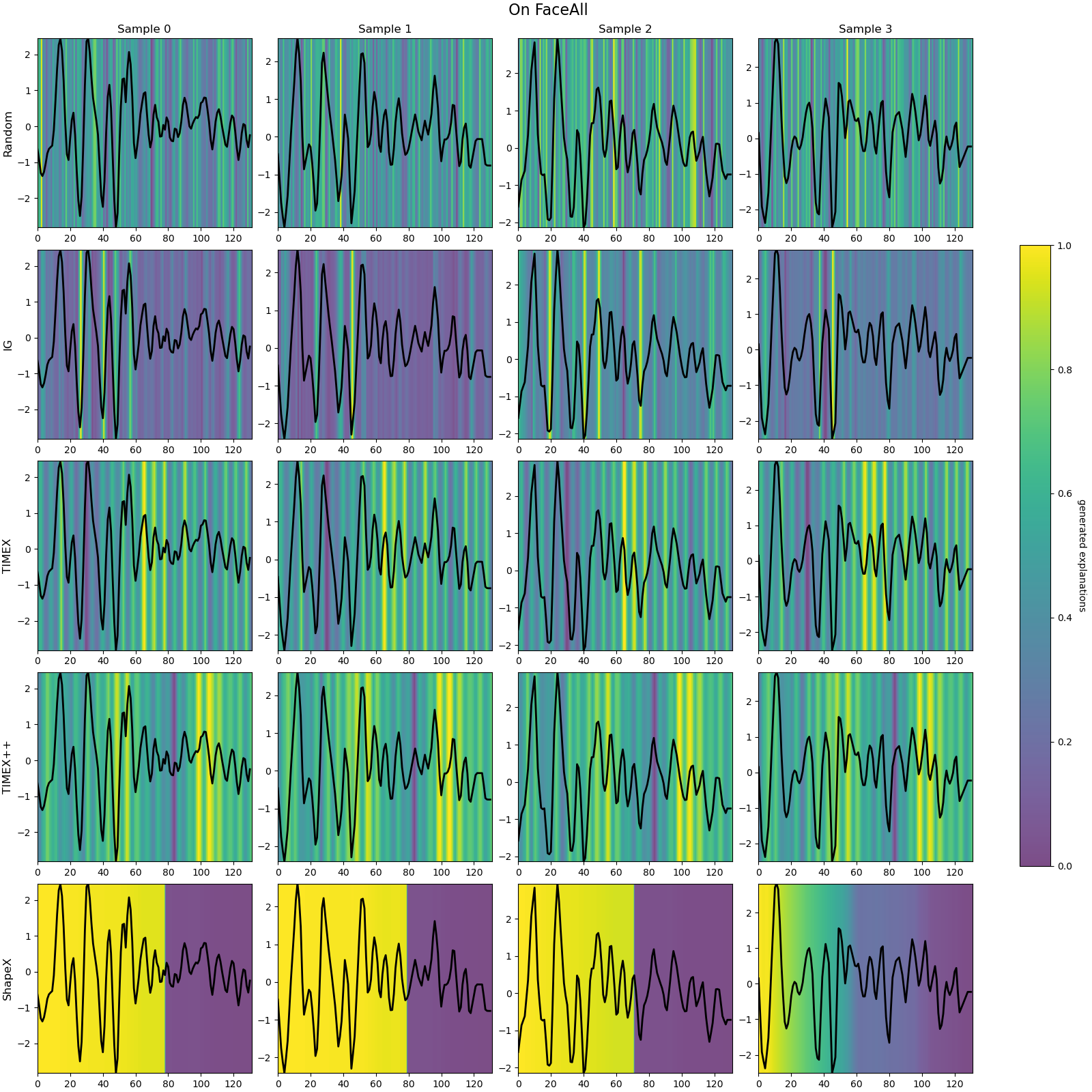} \\
            \caption{Visualization of saliency score on FaceAll dataset.}\label{fig:vis_faceall} 
\end{figure*}

\begin{figure*}[h]	
			\centering  
			\centering    
			\includegraphics[scale=0.4]{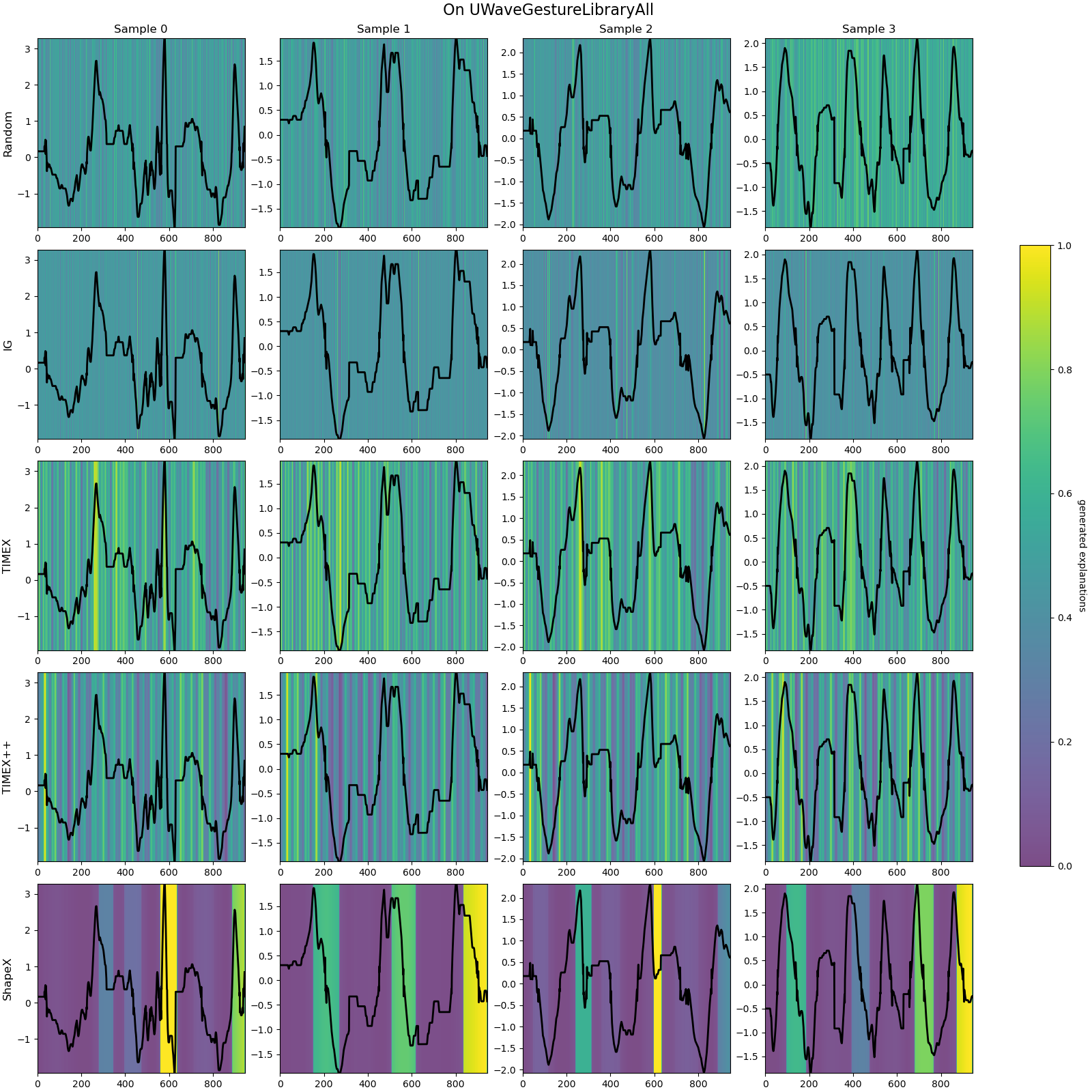}
    
	\caption{Visualization of saliency score on UWaveGestureLibraryAl dataset.}
            \label{fig:vis_UWave} 
\end{figure*}

To qualitatively assess the interpretability of \shapex, we present case studies on three representative UCR datasets with well-defined domain semantics: PhalangesOutlinesCorrect, FaceAll, and UWaveGestureLibraryAll. These datasets span the domains of medical imaging, biometric contours, and motion sensors, respectively. In each case, we compare \shapex\ against existing baselines by visualizing the generated saliency scores.

\textbf{PhalangesOutlinesCorrect Dataset.}
%The PhalangesOutlinesCorrect dataset \cite{boneage2014PhalangesOutlinesCorrect} is a one-dimensional time series derived from X-rays of the third phalanges through processing. First, the bone contours are extracted from the original X-rays. Then, the Euclidean distance from each pixel on the bone contour to the bone center is calculated. Starting from the bone’s center, distances from points on the contour to the center are measured in a clockwise direction to form a one-dimensional sequence. Notably, if the third phalange includes a growth plate, the distance sequence of the growth plate is appended to the sequence of the bone shaft.
%The PhalangesOutlinesCorrect dataset consists of two classes: Immature, which includes the early stages of the Tanner-Whitehouse (TW) phases, and Mature, which includes the later stages of the TW phases. In medical practice, the primary criterion for distinguishing these two classes is the assessment of the epiphyseal development, particularly observing its fusion with the bone shaft. In the one-dimensional time series representation, the linkage area between the bone shaft and the epiphysis roughly corresponds to the 40 to 60 time step interval.
The PhalangesOutlinesCorrect dataset \cite{boneage2014PhalangesOutlinesCorrect} is a one-dimensional time series derived from X-rays of the third phalanx through contour-based processing. First, the bone contour is extracted from the original radiograph. Then, the Euclidean distance from each point on the contour to the bone’s center is computed. By radially sampling these distances in a clockwise direction, a polar distance sequence is formed, which captures the morphological profile of the bone outline in 1D space.

As illustrated in Figure~\ref{fig:vis_Phalanges_cauae}, the sequence may contain two components: the shaft and the epiphysis. If a growth plate is present, the sequence includes a distinct transition region---highlighted in green—between the bone shaft and the epiphysis.
The dataset comprises two classes: \textit{Immature}, corresponding to the early Tanner-Whitehouse (TW) stages, and \textit{Mature}, representing the later stages. In clinical practice, the key criterion for distinguishing between these two classes is the extent of epiphyseal development, particularly its fusion with the diaphysis. In the time series representation, the growth plate transition region typically corresponds to the time interval between steps 40 and 60.

In Figure  \ref{fig:vis_Phalanges}, we observe that, as the most advanced baselines, \timex\ and \timexp\ predominantly generate explanations focused on the peaks or valleys of the sequence. However, these features are not the primary indicators of bone maturity. In contrast, \shapex\ highlights the sequence region between the 40th and 60th time steps, which is critical for reflecting the developmental status of the epiphysis. This indicates that \shapex\ effectively identifies segments within the sequence that are medically significant, distinguishing it significantly from other baselines that focus on prominent values in the sequence.

\textbf{FaceAll Dataset.}
The FaceAll dataset \cite{ucr2019} is sourced from the UCR archive. It consists of head contours collected from 14 graduate students, converted into one-dimensional time series through a series of image processing algorithms. The first half of the sequence corresponds to the facial contour of the human head, while the second half corresponds to the contour of the hair. This dataset includes 14 classes, each representing one of the 14 graduate students. Due to the high variability in human hair shapes, the classification results are primarily determined by the facial contour portion of the time series.

In the visualization of model predictions, as shown in Figure \ref{fig:vis_faceall}, we observe that only \shapex\ effectively highlights the facial contour regions, whereas other models merely focus on a few peaks and fail to provide meaningful insights.

\textbf{UWaveGestureLibraryAll Dataset.}
The UWaveGestureLibraryAll dataset \cite{uwave2009} generates sequences by recording acceleration signals during user gestures, encompassing eight predefined gestures. Compared to the phases of increasing acceleration, the phases where acceleration begins to decrease often carry more gesture-specific information. This is because decreasing acceleration indicates the beginnings of gesture transitions, such as sharp turns or circular motions.

In Figure \ref{fig:vis_UWave}, we observe that \shapex\ distinctly focuses on regions with negative slopes in the sequence, which correspond to the phases where gestures begin to change. This demonstrates that our method can effectively identify sub-sequences that are more valuable for sequence classification.

\end{document}